\newcommand\blfootnote[1]{%
  \begingroup
  \renewcommand\thefootnote{}\footnote{#1}%
  \addtocounter{footnote}{-1}%
  \endgroup
}
\newcommand{\coms}{\xspace\scalebox{0.5}{\begin{tikzpicture}\draw[dash pattern=on 3pt off 3pt, line width = 1mm ] (0,0) -- (0.8,0); \node at (0,-0.08) {}; \end{tikzpicture}}\xspace}
\newcommand{\moves}{\rightarrow}
\newcommand{\addCommunicationB}[1]
{
	\node[above right = 0.5mm and 4mm of #1, inner sep=0mm] (nb2) {B};
	\draw[tobehere,communication] (#1) -- (nb2);
}
\newcommand\setnodes{V}
\newcommand\basenode{B}
\newcommand{\sourcenode}{s}
\newcommand{\targetnode}{t}
\newcommand\USTCONN{USTCONN}
\newcommand\UCONN{UCONN}
\newcommand{\myundirectedgraphbounded}{\mathfrak G}
\newcommand{\myundirectedgraph}{\mathfrak G'}
\newcommand\pb[2]{${#1}_{#2}$~}
\newcommand\pbReach[1]{\pb{Reachability}{#1}}
\newcommand\pbBReach[1]{\pb{bReachability}{#1}}
\newcommand\pbCoverage[1]{\pb{Coverage}{#1}}
\newcommand\pbBCoverage[1]{\pb{bCoverage}{#1}}
\newcommand{\idDir}{dir}	
\newcommand{\idNC}{nc}		
\newcommand{\idSM}{sm}		
\newcommand{\idCC}{cc}		
\newcommand{\pbReachDir}{\pbReach{\idDir}}
\newcommand{\pbBReachDir}{\pbBReach{\idDir}}
\newcommand{\pbCoverageDir}{\pbCoverage{\idDir}}
\newcommand{\pbBCoverageDir}{\pbBCoverage{\idDir}}
\newcommand{\pbReachNC}{\pbReach{\idNC}}
\newcommand{\pbCoverageNC}{\pbCoverage{\idNC}}
\newcommand{\pbReachSM}{\pbReach{\idSM}}
\newcommand{\pbBReachSM}{\pbBReach{\idSM}}
\newcommand{\pbCoverageSM}{\pbCoverage{\idSM}}
\newcommand{\pbBReachCC}{\pbBReach{\idCC}}
\newcommand{\pbBCoverageCC}{\pbBCoverage{\idCC}}
\newtheorem{theorem}{Theorem}
\newtheorem{fact}[theorem]{Fact}
\newtheorem{definition}[theorem]{Definition}
\newtheorem{proposition}[theorem]{Proposition}
\newtheorem{lemma}[theorem]{Lemma}
\tikzstyle{copyofG} = [fill=gray!20!white, draw=none, rounded corners=8pt]
\tikzstyle{communication} = [dash pattern=on 1pt off 1pt] 
\tikzstyle{tobehere} = [color=blue!50!white]
\newcommand\set[1]{\{#1\}}
\newcommand{\booleanvariablegeneric}{{x}}
\newcommand{\clausegeneric}{\mathsf{c}}
\newcommand{\ok}{\includegraphics[height=2mm]{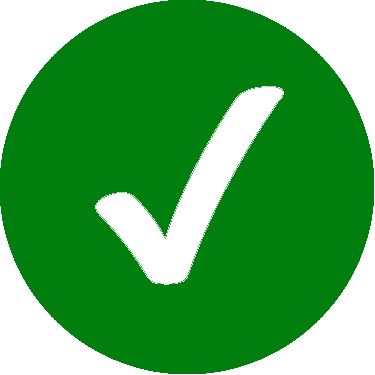}}
\newcommand{\imgDrone}{\includegraphics[height=2mm]{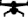}}
\tikzset{
	dot diameter/.store in=\dot@diameter,
	dot diameter=3pt,
	dot spacing/.store in=\dot@spacing,
	dot spacing=10pt,
	dots/.style={
		line width=\dot@diameter,
		line cap=round,
		dash pattern=on 0pt off \dot@spacing
	}
}
\newif\iffull
\newif\ifauthor
\title{Reachability and Coverage Planning for Connected Agents:
		\large
		\\
		Extended Version}
\author{
Tristan Charrier$^1$
\and
Arthur Queffelec$^1$\and
Ocan Sankur$^2$\And
Fran\c{c}ois Schwarzentruber$^1$
\affiliations
Univ Rennes, CNRS, IRISA$^1$\\
Univ Rennes, Inria, CNRS, IRISA$^2$\\
\emails
firstname.lastname@irisa.fr$^1$,
ocan.sankur@inria.fr$^2$,
}
\author{\# 4601
}
\begin{document}

\maketitle

\begin{abstract}
	Motivated by the increasing appeal of robots in information-gathering missions,
	we study multi-agent path planning problems in which the agents must remain
	interconnected. We model an area by a topological graph specifying the movement
	and the connectivity constraints of the agents. We study the theoretical
	complexity of the reachability and the coverage problems of a fleet of connected
	agents on various classes of topological graphs.
	We establish the complexity of these problems on known classes, and introduce a new class
        called \emph{sight-moveable graphs}
        which admit efficient algorithms.
\end{abstract}


\section{Introduction}
\label{sec:intro}

A number of use cases of planning rose in information-gathering missions from
the development of unmanned autonomous vehicles (UAVs). For instance, in search
and rescue missions, a fleet of drones can cover a lot of ground in a short
amount of time and report any finding to a mission supervisor to narrow the
search for the rescue team. Other examples are the analysis of terrain for smart
farms and in hazardous locations. For this kind of missions, the information
gathered is used for decision making at a supervising station. Thus, the drones
need to be constantly in communication with the station to report the gathered information
during the mission. The use of multiple UAVs to cover an area not only reduces the time
required to complete the mission but can also enable reaching locations which would
not be reachable with a single drone due to connection constraints.
\iffull
\else
\blfootnote{$^1$\href{https://drive.google.com/open?id=1dmvAWjN6vNwxGIBw9TmXCOjw3gKCZWoh}{https://drive.google.com/open?id=1dmvAWjN6vNwxGIBw9T\newline mXCOjw3gKCZWoh}}
\fi
The original multi-agent path planning problem asks for a plan to reach a
configuration of agents in a graph. However, an important problem for search and
rescue missions or terrain analysis is the coverage of an area. We thus study
both the coverage problem and the reachability problem under a connection
constraint over the agents which requires them to be connected to the base
either directly or via another agent, who can relay its data. We establish the
computational complexity of the connected coverage in its general case and for a
practical subclass introduced recently~\cite{dblp:conf/aaai/tateobrab18} in
which the UAVs can communicate with others located within one step, called the
\emph{neighbor-communicable} topological graphs. We show that the coverage is
PSPACE-complete in the general case, and remains so for neighbor-communicable
topological graphs. Thus, restricting to neighbor-communicable graphs does not
render the problem feasible, and the relatively high complexity unfortunately
remains. Note that this is in line with the PSPACE-completeness of the
reachability problem recently reported in~\cite{dblp:conf/aaai/tateobrab18}.


Our main result in this paper is the definition of a class of topological graphs
which is well adapted and realistic for UAV missions, and for which the coverage
and reachability problems admit efficient solutions. Our subclass, called
\emph{sight-moveable} graphs, is defined assuming that the UAVs cannot
communicate through obstacles and are restricted to line-of-sight communication.
This class emerged from an ongoing case study for a drone assisted search and
rescue project in which the authors take part. For this class, we prove that
both the reachability and coverage problems are in LOGSPACE while the existence
of a bounded execution is in NP. This drastically changes the status of this
problem since by LOGSPACE $\subseteq$ NC (this is the class of problems solvable
in polylogarithmic time in a parallel machine with a polynomial number of
processors),
one can build an efficient parallel algorithm~\cite{Cook:1979}.
The NP upper bound is also useful since this means efficient SAT solvers can be used directly
to compute bounded executions.
We prove that our algorithms for the bounded variants are optimal by showing NP-hardness
in each case.


\input{picture_execution}

In this work, we consider all agents to be anonymous in both cases (reachability
and coverage). Furthermore, we consider the collisions to be handled by the
agents themselves, hence are not considered along the results of this paper. We
depicted a covering execution of a topological graph by 3 UAVs in Figure
\ref{figure:exampleexecution}. In this example, the UAVs need to gather
information at each node of the graph while staying connected to the base (red
node) during the whole mission.

In Section
\ref{sec:preli}, we present the typical notions used in MAPP and their extension
for our case and the known results in connected planning. In Sections
\ref{sec:dir} to \ref{sec:compl}, we study the complexity of our problems from the general case
to the most restrictive one. We describe the related works in Section \ref{sec:relate}.
We conclude in Section \ref{sec:concl}.

\iffull
\else
\mbox{The full version of the paper\footnotemark[1] is available.}
\fi


\section{Preliminaries}
\label{sec:preli}


We first present the topological graphs and the subclasses we consider on which
we study the complexity of our problems. Then, we give definitions of plans and
executions, and formally define problems we consider.

In most applications of path planning, the space is discretized
in order to generate a graph of movements on which algorithms are executed.
For instance, \emph{regular grids} which decompose the space
in square, triangular or hexagonal cells, \emph{irregular grids} with techniques
such as quadtree \cite{Finkel:1974, Knoll:06} or \emph{Vorono\"i diagram}
comprehensively discussed in the survey \cite{Aurenhammer:1991}.

Our work is independent of the particular method used to obtain the discretization.
We only work under the hypothesis that a feasible plan on the graph generated by 
the discretization is also feasible in the continuous space.


\subsection{Topological graph}
\label{preli:graph}

\iffull
\begin{figure}
      \centering
      \begin{subfigure}{.23\textwidth}
        \centering
          \begin{tikzpicture}
            \tikzstyle{node} = [draw, circle, fill=black, inner sep=0.5mm];
            \tikzstyle{basenode} = [draw, circle, red, fill=red, inner sep=0.5mm];
            \foreach \x in {0,...,2}{
              \foreach \y in {0,...,2}{
                \ifthenelse{\x=0 \AND \y=0}{
                  \node [basenode] (\x,\y) at (\x,\y) {};
                }{
                  \ifthenelse{\x=1 \AND \y=1}{}{\node [node] (\x,\y) at (\x,\y) {};}
                }
              }
            }
            
            \foreach \x in {0,1}{
              \draw[communication, tobehere] (\x,0) edge[bend right] (\x+1,0);
              \node (0\x) at (\x+1,0) {};
              \draw[arrows={stealth-}] (0\x.center)+(-0.7mm,0) -- (\x,0);
            }
    
            \foreach \x in {0,1}{
              \node (2\x) at (\x,2) {};
              \draw[arrows={stealth-}] (2\x.center)+(0.7mm,0) -- (\x+1,2);
            }
    
            \node (a) at (0,1) {};
            \draw[arrows={stealth-}] (a.center)+(0,0.7mm) -- (0,2);
    
            \foreach \y in {0,1}{
              \draw[communication, tobehere] (2,\y) edge[bend right] (2,\y+1);
              \draw[-] (2,\y) -- (2,\y+1);
            }
    
            \draw[communication, tobehere] (2,0) edge[bend right] (0,2);
            \draw[communication, tobehere] (0,1) edge[bend right] (2,2);
          \end{tikzpicture}
        \caption{\footnotesize{Directed}}
        \label{fig:topo:dir}
      \end{subfigure}
      \begin{subfigure}{.23\textwidth}
        \centering
          \begin{tikzpicture}
            \tikzstyle{node} = [draw, circle, fill=black, inner sep=0.5mm];
            \tikzstyle{basenode} = [draw, circle, red, fill=red, inner sep=0.5mm];
            \foreach \x in {0,...,2}{
              \foreach \y in {0,...,2}{
                \ifthenelse{\x=0 \AND \y=0}{
                  \node [basenode] (\x,\y) at (\x,\y) {};
                }{
                  \ifthenelse{\x=1 \AND \y=1}{}{\node [node] (\x,\y) at (\x,\y) {};}
                }
              }
            }
            
            \foreach \x in {0,1}{
              \draw[communication, tobehere] (\x,0) edge[bend right] (\x+1,0);
              \node (0\x) at (\x+1,0) {};
              \draw[arrows={stealth-}] (0\x.center)+(-0.7mm,0) -- (\x,0);
            }
    
            \foreach \x in {0,1}{
              \draw[communication, tobehere] (\x,2) edge[bend left] (\x+1,2);
              \node (2\x) at (\x,2) {};
              \draw[arrows={stealth-}] (2\x.center)+(0.7mm,0) -- (\x+1,2);
            }
    
            \draw[communication, tobehere] (0,1) edge[bend left] (0,2);
            \node (a) at (0,1) {};
            \draw[arrows={stealth-}] (a.center)+(0,0.7mm) -- (0,2);
    
            \foreach \y in {0,1}{
              \draw[communication, tobehere] (2,\y) edge[bend right] (2,\y+1);
              \draw[-] (2,\y) -- (2,\y+1);
            }
    
            \draw[communication, tobehere] (2,0) edge[bend right] (0,2);
            \draw[communication, tobehere] (2,1) edge[bend right] (1,0);
            \draw[communication, tobehere] (0,1) edge[bend right] (2,2);
          \end{tikzpicture}
        \caption{\footnotesize{Neighbor-communicable}}
        \label{fig:topo:nc}
      \end{subfigure}
      \begin{subfigure}{.23\textwidth}
        \centering
          \begin{tikzpicture}
            \tikzstyle{node} = [draw, circle, fill=black, inner sep=0.5mm];
            \tikzstyle{basenode} = [draw, circle, red, fill=red, inner sep=0.5mm];
            \foreach \x in {0,...,2}{
              \foreach \y in {0,...,2}{
                \ifthenelse{\x=0 \AND \y=0}{
                  \node [basenode] (\x,\y) at (\x,\y) {};
                }{
                  \ifthenelse{\x=1 \AND \y=1}{}{\node [node] (\x,\y) at (\x,\y) {};}
                }
              }
            }
            
            \foreach \x in {0,1}{
              \draw[communication, tobehere] (\x,0) edge[bend right] (\x+1,0);
              \draw[-] (\x,0) -- (\x+1,0);
            }
    
            \foreach \x in {0,1}{
              \draw[communication, tobehere] (\x,2) edge[bend left] (\x+1,2);
              \draw[-] (\x,2) -- (\x+1,2);
            }
    
            \foreach \y in {0,1}{
              \draw[communication, tobehere] (0,\y) edge[bend left] (0,\y+1);
              \draw[-] (0,\y) -- (0,\y+1);
            }
    
            \foreach \y in {0,1}{
              \draw[communication, tobehere] (2,\y) edge[bend right] (2,\y+1);
              \draw[-] (2,\y) -- (2,\y+1);
            }
    
            \draw[communication, tobehere] (2,1) edge[bend right] (1,0);
    
            \draw[communication, tobehere] (0,1) edge[bend right] (2,2);
            \draw[communication, tobehere] (0,1) edge[bend right] (1,2);
            \draw[communication, tobehere] (0,2) edge[bend left] (2,2);
          \end{tikzpicture}
        
        \caption{\footnotesize{Sight-moveable}}
        \label{fig:topo:sm}
      \end{subfigure}
      \begin{subfigure}{.23\textwidth}
        \centering
          \begin{tikzpicture}
            \tikzstyle{node} = [draw, circle, fill=black, inner sep=0.5mm];
            \tikzstyle{basenode} = [draw, circle, red, fill=red, inner sep=0.5mm];
            \foreach \x in {0,...,2}{
              \foreach \y in {0,...,2}{
                \ifthenelse{\x=0 \AND \y=0}{
                  \node [basenode] (\x,\y) at (\x,\y) {};
                }{
                  \ifthenelse{\x=1 \AND \y=1}{}{\node [node] (\x,\y) at (\x,\y) {};}
                }
              }
            }
            
            \foreach \x in {0,1}{
              \draw[communication, tobehere] (\x,0) edge[bend right] (\x+1,0);
              \draw[-] (\x,0) -- (\x+1,0);
            }
    
            \foreach \x in {0,1}{
              \draw[communication, tobehere] (\x,2) edge[bend left] (\x+1,2);
              \draw[-] (\x,2) -- (\x+1,2);
            }
    
            \foreach \y in {0,1}{
              \draw[communication, tobehere] (0,\y) edge[bend left] (0,\y+1);
              \draw[-] (0,\y) -- (0,\y+1);
            }
    
            \foreach \y in {0,1}{
              \draw[communication, tobehere] (2,\y) edge[bend right] (2,\y+1);
              \draw[-] (2,\y) -- (2,\y+1);
            }
    
            \draw[communication, tobehere] (2,1) edge[bend right] (1,0);
            \draw[communication, tobehere] (0,1) edge[bend right] (1,2);
            \draw[communication, tobehere] (0,1) edge[bend left] (1,0);
            \draw[communication, tobehere] (2,1) edge[bend left] (1,2);
    
            \draw[communication, tobehere] (0,0) edge[bend left] (0,2);
            \draw[communication, tobehere] (0,0) edge[bend right] (2,0);
            \draw[communication, tobehere] (0,2) edge[bend left] (2,2);
            \draw[communication, tobehere] (2,0) edge[bend right] (2,2);
    
            \draw[communication, tobehere] (0,0) edge (2,2);
            \draw[communication, tobehere] (0,2) edge (2,0);
    
            \draw[communication, tobehere] (1,0) edge (1,2);
            \draw[communication, tobehere] (0,1) edge (2,1);
    
            \draw[communication, tobehere] (0,0) edge [bend left] (1,2);
            \draw[communication, tobehere] (0,0) edge [bend right] (2,1);
    
            \draw[communication, tobehere] (2,2) edge [bend left] (1,0);
            \draw[communication, tobehere] (2,2) edge [bend right] (0,1);
    
            \draw[communication, tobehere] (0,2) edge [bend right] (1,0);
            \draw[communication, tobehere] (0,2) edge [bend left] (2,1);
    
            \draw[communication, tobehere] (2,0) edge [bend left] (0,1);
            \draw[communication, tobehere] (2,0) edge [bend right] (1,2);

          \end{tikzpicture}
        
        \caption{\footnotesize{Complete-communication}}
        \label{fig:topo:compl}
      \end{subfigure}
      \caption{Examples of topological graphs.}
    \end{figure}
\else
\begin{figure}
      \vspace{1.5cm}
      \centering
      \begin{subfigure}{.11\textwidth}
        \centering
        \scalebox{0.75}{
          \begin{tikzpicture}
            \tikzstyle{node} = [draw, circle, fill=black, inner sep=0.5mm];
            \tikzstyle{basenode} = [draw, circle, red, fill=red, inner sep=0.5mm];
            \draw[use as bounding box] (1,0);
            \foreach \x in {0,...,2}{
              \foreach \y in {0,...,2}{
                \ifthenelse{\x=0 \AND \y=0}{
                  \node [basenode] (\x,\y) at (\x,\y) {};
                }{
                  \ifthenelse{\x=1 \AND \y=1}{}{\node [node] (\x,\y) at (\x,\y) {};}
                }
              }
            }
            
            \foreach \x in {0,1}{
              \draw[communication, tobehere] (\x,0) edge[bend right] (\x+1,0);
              \node (0\x) at (\x+1,0) {};
              \draw[arrows={stealth-}] (0\x.center)+(-0.7mm,0) -- (\x,0);
            }
    
            \foreach \x in {0,1}{
              \node (2\x) at (\x,2) {};
              \draw[arrows={stealth-}] (2\x.center)+(0.7mm,0) -- (\x+1,2);
            }
    
            \node (a) at (0,1) {};
            \draw[arrows={stealth-}] (a.center)+(0,0.7mm) -- (0,2);
    
            \foreach \y in {0,1}{
              \draw[communication, tobehere] (2,\y) edge[bend right] (2,\y+1);
              \draw[-] (2,\y) -- (2,\y+1);
            }
    
            \draw[communication, tobehere] (2,0) edge[bend right] (0,2);
            \draw[communication, tobehere] (0,1) edge[bend right] (2,2);
          \end{tikzpicture}
        }
        \caption{\footnotesize{Directed Graph}}
        \label{fig:topo:dir}
      \end{subfigure}
      \begin{subfigure}{.11\textwidth}
        \centering
        \scalebox{0.75}{
          \begin{tikzpicture}
            \tikzstyle{node} = [draw, circle, fill=black, inner sep=0.5mm];
            \tikzstyle{basenode} = [draw, circle, red, fill=red, inner sep=0.5mm];
            \draw[use as bounding box] (1,0);
            \foreach \x in {0,...,2}{
              \foreach \y in {0,...,2}{
                \ifthenelse{\x=0 \AND \y=0}{
                  \node [basenode] (\x,\y) at (\x,\y) {};
                }{
                  \ifthenelse{\x=1 \AND \y=1}{}{\node [node] (\x,\y) at (\x,\y) {};}
                }
              }
            }
            
            \foreach \x in {0,1}{
              \draw[communication, tobehere] (\x,0) edge[bend right] (\x+1,0);
              \node (0\x) at (\x+1,0) {};
              \draw[arrows={stealth-}] (0\x.center)+(-0.7mm,0) -- (\x,0);
            }
    
            \foreach \x in {0,1}{
              \draw[communication, tobehere] (\x,2) edge[bend left] (\x+1,2);
              \node (2\x) at (\x,2) {};
              \draw[arrows={stealth-}] (2\x.center)+(0.7mm,0) -- (\x+1,2);
            }
    
            \draw[communication, tobehere] (0,1) edge[bend left] (0,2);
            \node (a) at (0,1) {};
            \draw[arrows={stealth-}] (a.center)+(0,0.7mm) -- (0,2);
    
            \foreach \y in {0,1}{
              \draw[communication, tobehere] (2,\y) edge[bend right] (2,\y+1);
              \draw[-] (2,\y) -- (2,\y+1);
            }
    
            \draw[communication, tobehere] (2,0) edge[bend right] (0,2);
            \draw[communication, tobehere] (2,1) edge[bend right] (1,0);
            \draw[communication, tobehere] (0,1) edge[bend right] (2,2);
          \end{tikzpicture}
        }
        \caption{\footnotesize{Neighbor-communicable}}
        \label{fig:topo:nc}
      \end{subfigure}
      \begin{subfigure}{.11\textwidth}
        \centering
        \scalebox{0.75}{
          \begin{tikzpicture}
            \tikzstyle{node} = [draw, circle, fill=black, inner sep=0.5mm];
            \tikzstyle{basenode} = [draw, circle, red, fill=red, inner sep=0.5mm];
            \draw[use as bounding box] (1,0);
            \foreach \x in {0,...,2}{
              \foreach \y in {0,...,2}{
                \ifthenelse{\x=0 \AND \y=0}{
                  \node [basenode] (\x,\y) at (\x,\y) {};
                }{
                  \ifthenelse{\x=1 \AND \y=1}{}{\node [node] (\x,\y) at (\x,\y) {};}
                }
              }
            }
            
            \foreach \x in {0,1}{
              \draw[communication, tobehere] (\x,0) edge[bend right] (\x+1,0);
              \draw[-] (\x,0) -- (\x+1,0);
            }
    
            \foreach \x in {0,1}{
              \draw[communication, tobehere] (\x,2) edge[bend left] (\x+1,2);
              \draw[-] (\x,2) -- (\x+1,2);
            }
    
            \foreach \y in {0,1}{
              \draw[communication, tobehere] (0,\y) edge[bend left] (0,\y+1);
              \draw[-] (0,\y) -- (0,\y+1);
            }
    
            \foreach \y in {0,1}{
              \draw[communication, tobehere] (2,\y) edge[bend right] (2,\y+1);
              \draw[-] (2,\y) -- (2,\y+1);
            }
    
            \draw[communication, tobehere] (2,1) edge[bend right] (1,0);
    
            \draw[communication, tobehere] (0,1) edge[bend right] (2,2);
            \draw[communication, tobehere] (0,1) edge[bend right] (1,2);
            \draw[communication, tobehere] (0,2) edge[bend left] (2,2);
          \end{tikzpicture}
        }
        \caption{\footnotesize{Sight-moveable}}
        \label{fig:topo:sm}
      \end{subfigure}
      \begin{subfigure}{.11\textwidth}
        \centering
        \scalebox{0.75}{
          \begin{tikzpicture}
            \tikzstyle{node} = [draw, circle, fill=black, inner sep=0.5mm];
            \tikzstyle{basenode} = [draw, circle, red, fill=red, inner sep=0.5mm];
            \draw[use as bounding box] (1,0);
            \foreach \x in {0,...,2}{
              \foreach \y in {0,...,2}{
                \ifthenelse{\x=0 \AND \y=0}{
                  \node [basenode] (\x,\y) at (\x,\y) {};
                }{
                  \ifthenelse{\x=1 \AND \y=1}{}{\node [node] (\x,\y) at (\x,\y) {};}
                }
              }
            }
            
            \foreach \x in {0,1}{
              \draw[communication, tobehere] (\x,0) edge[bend right] (\x+1,0);
              \draw[-] (\x,0) -- (\x+1,0);
            }
    
            \foreach \x in {0,1}{
              \draw[communication, tobehere] (\x,2) edge[bend left] (\x+1,2);
              \draw[-] (\x,2) -- (\x+1,2);
            }
    
            \foreach \y in {0,1}{
              \draw[communication, tobehere] (0,\y) edge[bend left] (0,\y+1);
              \draw[-] (0,\y) -- (0,\y+1);
            }
    
            \foreach \y in {0,1}{
              \draw[communication, tobehere] (2,\y) edge[bend right] (2,\y+1);
              \draw[-] (2,\y) -- (2,\y+1);
            }
    
            \draw[communication, tobehere] (2,1) edge[bend right] (1,0);
            \draw[communication, tobehere] (0,1) edge[bend right] (1,2);
            \draw[communication, tobehere] (0,1) edge[bend left] (1,0);
            \draw[communication, tobehere] (2,1) edge[bend left] (1,2);
    
            \draw[communication, tobehere] (0,0) edge[bend left] (0,2);
            \draw[communication, tobehere] (0,0) edge[bend right] (2,0);
            \draw[communication, tobehere] (0,2) edge[bend left] (2,2);
            \draw[communication, tobehere] (2,0) edge[bend right] (2,2);
    
            \draw[communication, tobehere] (0,0) edge (2,2);
            \draw[communication, tobehere] (0,2) edge (2,0);
    
            \draw[communication, tobehere] (1,0) edge (1,2);
            \draw[communication, tobehere] (0,1) edge (2,1);
    
            \draw[communication, tobehere] (0,0) edge [bend left] (1,2);
            \draw[communication, tobehere] (0,0) edge [bend right] (2,1);
    
            \draw[communication, tobehere] (2,2) edge [bend left] (1,0);
            \draw[communication, tobehere] (2,2) edge [bend right] (0,1);
    
            \draw[communication, tobehere] (0,2) edge [bend right] (1,0);
            \draw[communication, tobehere] (0,2) edge [bend left] (2,1);
    
            \draw[communication, tobehere] (2,0) edge [bend left] (0,1);
            \draw[communication, tobehere] (2,0) edge [bend right] (1,2);

          \end{tikzpicture}
        }
        \caption{\footnotesize{Complete-communication}}
        \label{fig:topo:compl}
      \end{subfigure}
      \caption{Examples of topological graphs.}
    \end{figure}
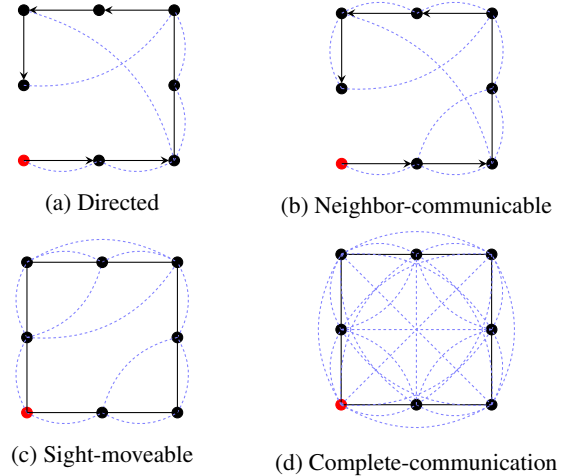
\fi
Compared to the graphs used in MAPP, we also consider \emph{communication edges}
which specify whether agents at two different locations can communicate. We call
graphs with this additional information \emph{topological graphs}. The formal
definition is the following.

\begin{definition}[Topological graph] A topological graph is a tuple $G=\langle
    \setnodes, \moves, \coms \rangle$, with $V$ a finite set of nodes containing
    a distinguished element $\basenode$, $\moves\subseteq V \times V$ a set of
    movement edges and $\coms\subseteq V\times V$ a set of undirected
    communication edges.
    \label{def:topo:dir}
\end{definition}

The node $\basenode$ symbolizes the supervision station from which the agents
start the mission. A topological graph is undirected if~$\langle\setnodes,
\moves\rangle$ is an undirected graph.

We will now consider three subclasses of interest.

In most situations, if an agent can move to a location in one step, it can also
communicate with an agent at that location. This class has been discussed in
\cite{dblp:conf/aaai/tateobrab18}. We call topological graphs satisfying this
requirement \emph{neighbor-communicable}. An example is given in
Figure~\ref{fig:topo:nc}.

\begin{definition}[Neighbor-Communicable topological graph] A
  neighbor-communicable topological graph is a topological graph such that $v\moves
  v'$ implies $v\coms v'$.
  \label{def:topo:nc}
\end{definition}

Another class of graphs is that of \emph{sight-moveable} and is the main one for
which we give efficient algorithms. First, this class requires the movement
edges to be undirected and reflexive. Second, whenever an agent can communicate
with another node, then it can also move to that node while maintaining the communication.
This intuitively means that we disallow communication through obstacles.
The formal
definition follows, and an example is depicted in Figure~\ref{fig:topo:sm}.

\begin{definition}[Sight-Moveable topological graph] A sight-moveable topologic
  graph is an undirected neighbor-communicable topological graph in
  which for all $v \in V$, $v\moves v$ and whenever $v \coms v'$, there exists a
  sequence $\rho=\langle \rho_1, \dots, \rho_n\rangle$ of nodes such that
  $v=\rho_1$, $v'=\rho_n$, $v \coms \rho_i$ and $\rho_i \moves \rho_{i+1}$ for
  all $i\in\{1,n\}$.
  \label{def:topo:sm}
\end{definition}

Last, we define the \emph{complete-communication} topological graphs which are
simply sight-moveable topological graphs with a complete communication topology.
This subclass can model the use of a hovering connected agent which allows a
constant communication for all the agents in the area. An example of such a
graph is depicted in Figure~\ref{fig:topo:compl}, and the formal definition is
the following.

\begin{definition}[Complete-Communication topological graph] A
  complete-communication topological graph is a sight-moveable topological
  graph such that $\coms=V\times V$. \label{def:topo:compl}
\end{definition}
Observe that complete-communication graphs are reflexive, undirected, connected
graphs with $\coms=V\times V$.

\subsection{Execution}
\label{preli:exec}

A plan or \emph{execution}, in MAPP, is a list of \emph{configurations} which
describes the placement of the agents during the mission. The formal definition
of a configuration is the following.

\begin{definition}[Configuration] A configuration $c$ of $n$ agents in a
  topological graph $G$ is an element of $V^n$ denoted $c=\langle c_1, \dots,
  c_n\rangle$ in which $c_i$ is the location of the agent $i$ such
  that the graph $\langle V_a, \coms \cap V_a \times V_a\rangle$ is connected
  with $V_a = \{\basenode, c_1, \dots, c_n\}$. We extend our notation 
  and denote $c \moves c'$ when $c_i \moves c'_i$ for all $0<i\leq n$.
  \label{def:config}
\end{definition}

Furthermore, in the literature, MAPP asks to associate an agent to a specific
goal. However, given that we are interested in covering an area with a fleet of
agents, the anonymity is useful to get more efficient plans.

\textbf{Anonymity.} In the rest of this paper, we consider the agents to be
\emph{anonymous}. In other words, a configuration $c$ is equivalent to a
configuration $c'$ iff $c$ is a reordering of $c'$.


Moreover, an important notion in MAPP is the computation of collision-free
plans. In the drone case, in which we are particularly interested, one can place
drones at different heights to avoid collisions. Additionally, most drones are,
nowadays, equipped with local collision avoidance systems.

\textbf{Collisions.} We do not deal with meet- or head-on-collisions
of agents, \textit{i.e.} we allow two agents to be located
in a same node,
and to move in opposite directions of an edge within a step.


An \emph{execution} $e$ of length $\ell$ with $n$ agents
    in a graph $G$ is a sequence of configuration $\langle c^1, \dots,
    c^\ell\rangle$ such that for $c^i \moves c^{i+1}$ for all $0<i<\ell$.
A \emph{covering execution} $e=\langle c^1,
    \dots c^\ell\rangle$ of length $\ell$ with $n$ agents in a graph $G$ is an
    execution such that $c^1=c^\ell=\langle B, \dots, B\rangle$ and for all
    $v\in V$, there exists $i\in \{1,\ldots,\ell\}$ with $v\in c^i$.

\subsection{Decision problems}
\label{preli:prob}

We define the MAPP problems, the \emph{Reachability} problem along with its
bounded version, \emph{bReachability}, for the makespan optimization of the
plan. In addition, we define the \emph{Coverage} problem and the bounded
coverage, \emph{bCoverage}.

\begin{definition}[Reachability]
  Given a topological graph $G$ and a configuration~$c$,
  decide if there is an execution $\langle c^1, \dots,
    c^\ell\rangle$ in $G$ such that $c^1 = \langle B, \dots, B\rangle$ and
    $c^\ell = c$.
  \label{def:pb:reach}
\end{definition}

\begin{definition}[Coverage]
  Given a topological graph $G$ and $n\in \mathds{N}$ written in unary,
  decide if there exists a covering execution with $n$ agents.
  \label{def:pb:cover}
\end{definition}

\begin{definition}[bReachability]
  Given a topological graph $G$, configuration~$c$ and
    $\ell\in \mathds{N}$ written in unary,
  decide if there is an execution $\langle c^1, \dots,
    c^{\ell'}\rangle$ in $G$ s.t. $\ell' \leq \ell$ and $c^{\ell'} = c$.
  \label{def:pb:breach}
\end{definition}


\begin{definition}[bCoverage]
  Given a topological graph $G$, $n,\ell\in\mathds{N}$ written in unary,
  decide if there exists a covering execution of length $\ell'$ such
    $\ell'\leq \ell$.
  \label{def:pb:bcover}
\end{definition}


We study the restrictions of the above problems to
classes of topological graphs. We denote by \pb{B}{c}, with $B$ one of the four
above problems restricted to the class of topological graph $c$ ($c$ can either
be $dir$ for directed, $nc$ for neighbor-communicable, $sm$ for sight-moveable
or $cc$ for complete-communication topological graphs).

\subsection{Known results}
\label{preli:known}

The complexity of the decision problem associated to the minimization of the
makespan is known to be NP-hard since \cite{Ratner:86}. Throughout the study of
MAPP, NP-hardness was shown to hold on planar graphs \cite{Yu:2015}
and, later, on 2D grid graphs \cite{Banfi:2017}. Variants
of MAPP have been studied such as the package-exchange
robot-routing problem \cite{Ma:2016} where the robots are anonymous but not
the package they exchange, is shown to be NP-hard. A class
of grid graphs was shown to be solvable in polynomial time
\cite{Wang:2009}.

The connected version of MAPP was introduced in \cite{Hollinger:2012}, in
which a topological graph discretizes the space and it is proved that the
existence of a plan for the reachability of a configuration of agents in a
bounded amount of steps is NP-hard:

\begin{theorem} \label{th:dir:lb:breach}
  \pbBReach{} restricted to undirected topological graphs is NP-hard
  \cite{Hollinger:2012}.
\end{theorem}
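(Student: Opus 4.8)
The plan is to give a polynomial-time reduction from \textbf{3-SAT}; since $\ell$ is written in unary, an execution of length at most $\ell$ is a polynomial-size certificate and checking it (each configuration connected, consecutive configurations $\moves$-related) is polynomial, so membership in NP is also immediate and the reduction yields NP-completeness.

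Given a 3-CNF formula $\varphi = c_1 \wedge \dots \wedge c_m$ over variables $x_1,\dots,x_n$, I would build an \emph{undirected} topological graph $G$, a target configuration $c$ of $n+m$ (anonymous) agents, and the constant bound $\ell = 4$. For each variable $x_i$ there is a gadget made of a hub $h_i$, two ``literal nodes'' $u_i^{\top}$ and $u_i^{\bot}$, and a private goal $g_i$, with movement edges $B \moves h_i$, $h_i \moves u_i^{\top}$, $h_i \moves u_i^{\bot}$, $u_i^{\top} \moves g_i$ and $u_i^{\bot} \moves g_i$. For each clause $c_j$ there is a gadget that is simply the movement path $B \moves p_j \moves t_j \moves t_j^{\mathrm{fin}}$. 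The target configuration $c$ puts one agent on each $g_i$ and one on each $t_j^{\mathrm{fin}}$; as required, every agent starts on $B$. The communication edges are: $B \coms v$ for every gadget node $v$ among the $h_i$, $u_i^{\top}$, $u_i^{\bot}$, $g_i$, $p_j$, $t_j^{\mathrm{fin}}$; and, for each clause, $t_j \coms u_i^{\top}$ whenever the literal $x_i$ occurs in $c_j$ and $t_j \coms u_i^{\bot}$ whenever $\neg x_i$ occurs in $c_j$. There are no other edges; in particular $t_j$ is never joined to $B$ by a communication edge.

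The intended plan takes exactly three moves: from the all-$B$ configuration the agents first fan out ($B \to h_i$, resp.\ $B \to p_j$); then, picking a satisfying assignment $\alpha$ of $\varphi$, they move inward ($h_i \to u_i^{\alpha(x_i)}$, resp.\ $p_j \to t_j$); then they reach their goals ($u_i^{\alpha(x_i)} \to g_i$, resp.\ $t_j \to t_j^{\mathrm{fin}}$). All configurations except the middle one are trivially connected, since every occupied node is joined to $B$ by a communication edge. In the middle configuration the occupied nodes are the $u_i^{\alpha(x_i)}$ together with the $t_j$, and each $t_j$ is reattached to $B$ exactly through an occupied literal node witnessing a literal of $c_j$ that $\alpha$ makes true, so this configuration is connected if and only if $\alpha \models \varphi$. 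Conversely, in $G$ the node $g_i$ (resp.\ $t_j^{\mathrm{fin}}$) can be reached from $B$ only through gadget $i$ (resp.\ $j$), at $\moves$-distance exactly $3$; hence any execution of length at most $4$ ending in $c$ is forced to move every agent along a geodesic inside its own gadget, so at the middle step the variable agents occupy literal nodes $u_i^{\ell_i}$ --- which defines an assignment $\alpha$ --- and connectivity of that middle configuration forces $\alpha \models c_j$ for every $j$. Both implications follow, and $G$, $c$, $\ell$ are clearly computable in polynomial time.

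I expect the only real work to lie in the ``only if'' direction, i.e.\ excluding cheating: one has to check that the schedule is genuinely slack-free, so that no agent can touch both literal nodes of a variable or wander into a foreign gadget, and that the sparse communication wiring creates no unintended route --- in particular that at the middle step a clause node $t_j$ can be reattached to the base \emph{only} through an occupied, satisfying literal node, and never through another clause node or through a node left empty by the tight timing. These points are all settled by the distance count above, which is precisely why $G$ must be kept as ``thin'' as described. I would also observe that the ``all agents start on $B$'' requirement is used only harmlessly here --- the agents merely disperse on the first move --- which is why encoding 3-SAT directly is more convenient than importing hardness of a labelled pebble-motion problem.
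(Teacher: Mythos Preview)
The paper does not actually prove this theorem: it is quoted in the ``Known results'' subsection and attributed to Hollinger~\cite{Hollinger:2012}, with no argument given. So there is no ``paper's own proof'' to compare against for this particular statement.

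That said, your reduction is correct and, more interestingly, it is essentially the same construction the paper uses later to prove the \emph{stronger} Proposition~\ref{prop:sm:lb:breach} (\pbBReachSM is NP-hard). There too one finds variable gadgets with two literal branches meeting at a goal, clause gadgets consisting of a short path to a goal, a communication edge between a clause node and each of its literal nodes, and a length bound that forces every agent onto a geodesic so that the middle configuration encodes an assignment whose connectivity is equivalent to satisfaction. The differences are minor and serve only to make the paper's graph sight-moveable: the literal nodes communicate directly with $\basenode$ (rather than via a hub), and the goal nodes are chained together and linked back to $\basenode$ by a short fully-connected path so that the final configuration is connected and the sight-moveable axiom holds. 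Your simpler choice---giving every node except the clause nodes $t_j$ a direct $\coms$-edge to $\basenode$---achieves the same effect for the undirected class and is a perfectly clean instantiation of the same idea. Your distance-$3$ rigidity argument in the converse direction is exactly the one the paper carries out as well.
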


In \cite{dblp:conf/aaai/tateobrab18}, it is shown that deciding the existence of
a feasible plan is PSPACE-complete:

\begin{theorem} \label{th:dir:lb:reach}
    \pbReach{} restricted to undirected topological graphs is PSPACE-complete
    \cite{dblp:conf/aaai/tateobrab18}.
\end{theorem}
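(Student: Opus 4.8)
The claim has two halves: \pbReach{} restricted to undirected topological graphs lies in PSPACE, and it is PSPACE-hard. Since the statement concerns undirected graphs, the hardness half must be established with undirected movement edges (communication edges are undirected by Definition~\ref{def:topo:dir} anyway), so the source of difficulty has to be the global connectivity invariant of Definition~\ref{def:config} rather than any asymmetry of movement; with a single agent and no connectivity requirement the problem would be plain graph reachability.

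\textbf{Membership in PSPACE.} The plan is to regard the problem as reachability in the configuration graph whose vertices are the connected configurations $c \in V^n$ and whose edges are the pairs $c \moves c'$. A vertex is described by $n\lceil \log_2|V|\rceil$ bits, and, given two such descriptions, one can check in polynomial time both that each is a legal configuration (that is, that $\langle V_a, \coms\cap V_a\times V_a\rangle$ is connected, a graph-connectivity test) and that the pair is a legal joint move. A feasible execution may be exponentially long, but a nondeterministic procedure can guess it configuration by configuration, storing only the current configuration and a binary step counter bounded by $|V|^n$; both fit in polynomial space. Hence the problem is in NPSPACE, and NPSPACE $=$ PSPACE by Savitch's theorem.

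\textbf{PSPACE-hardness.} I would reduce from the word problem of a polynomial-space Turing machine $M$, i.e.\ from whether $M$ accepts a given input $x$ using space $s=\mathrm{poly}(|x|)$, which is PSPACE-complete. The goal is to build in polynomial time an undirected topological graph $G$ and a target configuration $c$ so that feasible executions of a connected fleet of $\Theta(s)$ agents correspond to accepting runs of $M$ on $x$. The encoding idea: the agents form a relay chain rooted at the base $B$, and the exact placement of the chain encodes a configuration of $M$. Each tape cell is a small gadget; one agent sits in the gadget of its cell, and the node it occupies records the cell's symbol, whether the head is there, and, on the head cell, the control state. Movement edges let an agent shuffle within its own gadget and to the two neighbouring gadgets; communication edges are arranged so that the chain stays connected (hence the configuration stays legal) only when adjacent cell-gadgets are in ``compatible'' states. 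A single legal step of $M$ — rewrite the symbol under the head, move the head one cell, change state — is realised by a short sequence of moves localised around the head gadget, and the layout makes this essentially the \emph{only} productive reconfiguration available: any other move either breaks connectivity or runs into a dead end from which the target is unreachable. The target $c$ puts every agent in the node encoding a fixed, normalised accepting configuration of $M$ (hardwire $M$ to blank its tape and park the head before accepting). Then $M$ accepts $x$ iff $c$ is reachable from $\langle B,\dots,B\rangle$, and $G$, $c$, $n$ are all polynomial in $|x|$. An alternative route is to reduce from Nondeterministic Constraint Logic reconfiguration, which is PSPACE-complete and purpose-built for motion-planning gadget reductions: represent each constraint-graph edge by an agent whose position encodes its orientation, and encode the AND/OR vertex constraints through the connectivity requirement.

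\textbf{Main obstacle.} The delicate part is soundness of the reduction: showing no feasible execution can reach $c$ unless $M$ genuinely accepts. This needs an invariant proving that every connected configuration reachable from the start encodes a (possibly mid-update) legal configuration of $M$, and that connectivity really does rule out the ``cheating'' moves — agents drifting off the chain, piling up in one gadget, or exploiting slack in the communication edges to jump past a transition gadget. Two features help: the makespan is unconstrained here, so plan length need not be tracked, and anonymity means only the multiset of occupied nodes matters, so the chain structure must be forced by connectivity alone rather than by agent identities. Making that forcing watertight in the gadget design is exactly where the work concentrates.
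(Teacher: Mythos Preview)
The paper does not prove this theorem itself: it is stated in the ``Known results'' subsection as a result of Tateo et al., with only a one-sentence remark that their setting differs by allowing an arbitrary initial configuration, and that this gap closes by duplicating the base and adding edges so the agents can reach any desired start in one step. So there is no proof in the paper to compare against beyond that adaptation remark.

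Your PSPACE upper bound is correct and is exactly the argument the paper itself uses for the more general directed case in Theorem~\ref{prop:dir:ub:cover-reach}: guess the execution configuration by configuration in NPSPACE and apply Savitch.

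Your hardness direction, however, is a plan rather than a proof. You correctly identify the crux --- that connectivity alone must force the chain structure and rule out every ``cheating'' reconfiguration --- but you do not supply the gadget design, and for a direct Turing-machine encoding with anonymous, collision-free agents this is genuinely delicate: agents can pile up in one gadget, abandon cells, or exploit any slack communication edge to short-circuit the chain, and nothing in your sketch yet prevents this. Since the paper simply cites the result, there is no construction here to check yours against; if you intend to complete the argument, the Nondeterministic Constraint Logic route you mention is likely the cleaner one, because it replaces the global TM-configuration invariant by per-vertex inflow constraints that are far easier to enforce locally through connectivity.
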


Authors prove this result for graphs with self-loops and a
base~\cite{dblp:conf/aaai/tateobrab18} as in our setting (see Discussion
following Theorem 1). The only difference with our setting is that the agents
start at a specific configuration in~\cite{dblp:conf/aaai/tateobrab18}.
Nevertheless, it can be shown easily that our problem is equivalent by
duplicating the base and adding edges so that the agents reach the initial
configuration at the second step.


In the rest of the paper, we study the upper bounds and the lower bounds
complexity of the defined decision problems on the previously defined
topological graphs. The following sections present our results, respectively,
for the general case, the neighbor-communicable graphs, sight-moveable graphs,
and complete-communication graphs.

\section{Directed Topological Graphs}
\label{sec:dir}


We start with the following upper bound which is obtained
by a straightforward guess and check algorithm:
\begin{proposition} \label{prop:dir:ub:bcover-breach}
	\pbBCoverageDir and \pbBReachDir are in NP.
\end{proposition}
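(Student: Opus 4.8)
\section*{Proof proposal}

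The plan is a classical guess\nobreakdash-and\nobreakdash-check argument. The key observation is that, since $\ell$ (and, for \pbBCoverageDir, also the number of agents $n$) is written in unary, a witnessing execution has size polynomial in the size of the input, so it can serve as an NP certificate.

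First I would nondeterministically guess an integer $\ell'\le\ell$ and a sequence $e=\langle c^1,\dots,c^{\ell'}\rangle$ with each $c^i\in V^{n}$, where for \pbBReachDir the value $n$ is read off from the target configuration $c\in V^n$. Storing each $c^i$ as an $n$\nobreakdash-tuple of node identifiers costs $O(n\log|V|)$ bits, and since $\ell'\le\ell$ with $\ell$ in unary, the whole guess is of polynomial size. Then I would verify, in deterministic polynomial time, that $e$ is an execution of the required form: (i) for every $i$, the configuration $c^i$ is valid, that is, the undirected graph on $V^i_a=\{\basenode,c^i_1,\dots,c^i_n\}$ with edge set $\coms\cap(V^i_a\times V^i_a)$ is connected, which is one graph search per configuration; (ii) for every $i<\ell'$, $c^i\moves c^{i+1}$, which amounts to $n$ membership queries in $\moves$ per step; and (iii) the boundary and objective constraints hold, namely $c^1=\langle\basenode,\dots,\basenode\rangle$ and $c^{\ell'}=c$ for \pbBReachDir, and $c^1=c^{\ell'}=\langle\basenode,\dots,\basenode\rangle$ together with $\bigcup_{i=1}^{\ell'}\{c^i_1,\dots,c^i_n\}=V$ for \pbBCoverageDir. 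Each of these tests runs in polynomial time, so the nondeterministic procedure accepts exactly when a valid execution exists, which yields the NP upper bound.

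I do not expect a genuine obstacle here; the only thing that must be respected is the unary encoding of $\ell$ and $n$, without which the guessed execution could be exponentially long and the argument would collapse (this is consistent with the PSPACE lower bound of Theorem~\ref{th:dir:lb:reach} for the unbounded variant). One minor point of care: in directed topological graphs movement edges need not be reflexive, so one cannot a priori normalise executions to a fixed length by inserting stuttering steps; this is precisely why $\ell'$ is part of the guess rather than fixed to $\ell$, at no additional cost.
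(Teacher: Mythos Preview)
Your proposal is correct and matches the paper's own argument, which is simply the one-line observation that one can guess a bounded-length execution and check it in polynomial time because $\ell$ and $n$ are in unary. Your write-up just spells out the verification steps in more detail (and your remark about guessing $\ell'$ rather than padding is a nice bit of care that the paper leaves implicit).
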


In both cases, we can guess and check a path of bounded length in NP since the
input is encoded in unary.

We furthermore establish the following results:

\begin{theorem}
  \pbCoverageDir and \pbReachDir are PSPACE-complete.
  \label{prop:dir:ub:cover-reach}
  \label{th:dir:compl:reach}
\end{theorem}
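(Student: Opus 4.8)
The plan is to prove membership in PSPACE and PSPACE-hardness separately. For membership, I would note that a configuration of $n$ agents on $|V|$ nodes is a vector in $V^n$, which takes polynomial space to store. Both problems reduce to a reachability question in the (exponentially large but implicitly described) configuration graph whose vertices are the connected configurations and whose edges are the $\moves$ steps: for \pbReachDir we ask whether $\langle B,\dots,B\rangle$ reaches the target $c$; for \pbCoverageDir we ask whether there is a closed walk from $\langle B,\dots,B\rangle$ that collectively visits every node of $V$. The coverage case needs a little care since "cover everything and return" is not a plain reachability query; the standard trick is to augment the state with the set of nodes visited so far, but that set is exponential. Instead I would guess the covering execution nondeterministically step by step in NPSPACE, keeping only the current configuration, a step counter bounded by $|V|\cdot 2^{n\log|V|}$ (stored in polynomial space), and — rather than the full visited-set — observe that it suffices to verify coverage by a separate pass: for each $v\in V$ (iterating over polynomially many nodes) re-guess the execution and check that $v$ is visited at some step. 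Since NPSPACE $=$ PSPACE by Savitch, this gives the upper bound. (The unary encoding of $n$ is what keeps a single configuration polynomial-sized.)

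For hardness, the cleanest route is to invoke Theorem~\ref{th:dir:lb:reach}: \pbReach{} restricted to undirected topological graphs is already PSPACE-complete, and undirected topological graphs are a special case of (directed) topological graphs, so \pbReachDir is PSPACE-hard immediately, and combined with the membership argument it is PSPACE-complete. The work then concentrates on \pbCoverageDir. Here I would give a reduction from \pbReachDir (or directly from the PSPACE-hard undirected reachability problem). Given an instance $(G,c)$ of reachability with target configuration $c=\langle c_1,\dots,c_n\rangle$, I would build a new topological graph $G'$ in which covering forces the agents to realize the reachability plan. The idea: take $G$, and attach to it a "gadget" region that can only be collectively visited once the agents have assembled into configuration $c$ — for instance, hang $n$ fresh pendant nodes $d_1,\dots,d_n$ off the nodes $c_1,\dots,c_n$ respectively (with a movement edge $c_i \moves d_i$ and communication edges arranged so that the only connected configuration containing all of $d_1,\dots,d_n$ is the one obtained by a single synchronized step from exactly $c$), plus possibly auxiliary relay nodes so that connectivity to $B$ is preserved while the agents sit on the $d_i$'s. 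One must also ensure the original nodes of $G$ are all coverable "for free" (e.g.\ they are visited along the way, or the gadget is designed so that coverage of $V(G)$ is easy and only the $d_i$'s are the bottleneck), and that the agents can return to $B$ at the end. Then a covering execution of $G'$ exists with $n$ agents iff the agents can reach $c$ in $G$, establishing PSPACE-hardness of \pbCoverageDir.

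The main obstacle I anticipate is the coverage reduction: making the gadget force \emph{exactly} the configuration $c$ (not some reordering or near-miss) while respecting the connectivity invariant of Definition~\ref{def:config} at every intermediate step, and simultaneously guaranteeing that every other node of $G'$ is cheaply coverable and that a return trip to $B$ is always possible. Getting connectivity right is delicate because adding communication edges to keep $B$ connected to the $d_i$'s could inadvertently create extra connected configurations that bypass the gadget's intent; conversely, too few communication edges could make the intended covering execution infeasible. A safe design is probably to route all connectivity through a single "backbone" path of relay nodes that the extra agents never need to leave, and to use directedness of the movement edges in $G'$ (even though $G$ was undirected) to make the $d_i$ nodes one-way sinks reachable only in the final step — directedness is exactly the extra freedom we have here and it simplifies the argument considerably. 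The alternative, if the explicit gadget gets unwieldy, is to reduce from a PSPACE-complete tiling/space-bounded-TM problem directly in the style of Theorem~\ref{th:dir:lb:reach}, but reusing that theorem as a black box is far less error-prone.
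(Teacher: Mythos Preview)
Your plan is broadly aligned with the paper's proof, but there are two concrete slips worth flagging.

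\textbf{Upper bound.} You claim that storing the set of visited nodes is ``exponential''. It is not: the visited set is a subset of $V$, so it takes $|V|$ bits, which is polynomial in the input. The paper's NPSPACE algorithm simply guesses the execution step by step while maintaining the current configuration \emph{and} the visited set, then invokes Savitch. Your workaround --- re-guessing the execution once per node $v$ --- swaps the quantifiers: it decides $\forall v\,\exists$ (execution from $\langle B,\dots,B\rangle$ back to $\langle B,\dots,B\rangle$ visiting $v$), not $\exists$ (execution) $\forall v$. These happen to coincide here because executions that start and end at $\langle B,\dots,B\rangle$ can be concatenated, but you never state or use that fact, and without it the argument is unsound. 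Just keep the visited set.

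\textbf{Lower bound.} You and the paper take the same route: reduce \pbReachDir to \pbCoverageDir by bolting a gadget onto $G$ whose fresh nodes can only be visited after the agents have assembled into $c=\langle c_1,\dots,c_k\rangle$. The idea you are missing, and which resolves exactly the obstacle you flag (``ensure the original nodes of $G$ are all coverable for free''), is the paper's \emph{universal communicator}: after the agents leave $c$ they are funnelled through fresh layers $s_1,\dots,s_k$ then $v_1,\dots,v_k$, where $v_k$ has a communication edge to \emph{every} node of $G'$ and a self-loop. Once one agent parks at $v_k$, a single other agent can wander through all of $G$ (via a bidirectional edge between $v_1$ and every node of $G$) and mop up any still-unvisited vertices while staying connected. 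This cleanly decouples ``force $c$'' from ``cover the rest''. Your suggestion to make the $d_i$ ``one-way sinks reachable only in the final step'' is incompatible with the definition of a covering execution, which must end back at $\langle B,\dots,B\rangle$; the paper instead uses one-way edges $c_i\to s_i\to v_i$ together with a long one-way return path $v_k\to\cdots\to B$, so the agents are forced through the gadget in lockstep but can still get home.
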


The upper bounds are obtained by a straightforward NPSPACE algorithm
that guesses an execution by keeping in memory the last configuration,
and, for \pbCoverageDir, the set of visited regions. We conclude with
Savitch's Theorem (NPSPACE=PSPACE)\cite{Savitch:70}.

The lower bound on \pbReachDir was proven in Theorem~\ref{th:dir:lb:reach}. We
now concentrate on \pbCoverageDir.

\begin{lemma}
  \label{lemma:dir:lb:cover}
  \pbCoverageDir is PSPACE-hard.
\end{lemma}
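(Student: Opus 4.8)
The plan is to reduce from \pbReachDir, which is already known to be PSPACE-hard by Theorem~\ref{th:dir:lb:reach}. Given an instance $\langle G, c\rangle$ of \pbReachDir, I want to build (in logspace) a directed topological graph $G'$ such that $G$ admits a reachability execution from the all-$B$ configuration to $c$ if and only if $G'$ admits a covering execution with an appropriate number of agents $n$. The intuition is that a covering execution is a more demanding object than a reachability execution, so the reduction must (i) force the covering execution to first ``simulate'' the reachability execution, and then (ii) once the target configuration $c$ is reached, open up a gadget that lets the fleet sweep the remaining, freshly-added nodes and return to $B$, but in such a way that this final sweep is only possible after $c$ has genuinely been attained.

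The first step is to decide how many agents to use. Since \pbReachDir fixes a configuration $c=\langle c_1,\dots,c_n\rangle$, I would run the covering instance with the same $n$ (recall $n$ is written in unary in \pbCoverageDir, and $c$ is part of the input of \pbReachDir, so $n$ is available). Then I add to $G$ a collection of new ``trap'' nodes that no agent can ever visit unless the agents are positioned exactly at $c$. Concretely, for each $i$ I attach a node $t_i$ reachable only from $c_i$ (movement edge $c_i \moves t_i$), and a return path $t_i \moves \cdots \moves B$. Communication edges are added so that the configuration in which agent $i$ sits at $t_i$ for all $i$ is connected (e.g. make all $t_i$ communicate with $B$, or chain them). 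This way, the only configuration from which all the $t_i$ can be simultaneously covered in the next step is $c$ itself, and every node of $G$ still needs to be covered along the way. The covering execution must start and end at $\langle B,\dots,B\rangle$, so after dipping into the trap nodes the agents walk back to $B$ along the return paths.

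The subtle point, and the step I expect to be the main obstacle, is the interaction between connectivity and coverage: I must ensure that (a) the new nodes $t_i$ and the return paths are covered \emph{only} via the intended route, so that a covering execution genuinely encodes a reachability execution of the original instance, and (b) every configuration along the intended covering execution is connected. For (a), the dangerous case is that the $t_i$'s or parts of the return paths could be covered ``for free'' by some agent wandering there before $c$ is reached; making the edges into $t_i$ originate \emph{only} from $c_i$ blocks this, but one has to double-check that a clever adversary cannot reach $t_i$ via the return path run backwards — so the return path must be strictly directed towards $B$. For (b), one must choose the communication edges on the gadget carefully: while the agents are simulating the reachability plan inside $G$, the gadget nodes are unoccupied and irrelevant to connectivity; once the agents move onto the $t_i$'s and then onto the return paths, I need each intermediate configuration to remain connected to $B$, which I can guarantee by making every gadget node communicate directly with $B$ (or by routing all return paths through a single corridor adjacent to $B$).

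Finally I would verify the two directions. Forward: if $G$ has an execution reaching $c$, then in $G'$ follow that execution (the gadget is untouched and $G$'s nodes get covered — though I should also make sure $G$'s own nodes are all covered; if the reachability execution does not visit every node of $G$, I can prepend a preliminary sweep phase, or simply observe that \pbReachDir instances can be assumed w.l.o.g.\ to have all nodes relevant, or add the missing nodes to the trap side). Then all agents step onto the $t_i$'s, covering them, and walk back to $B$ along the return paths, covering those too; the result is a covering execution. Backward: in any covering execution of $G'$, some configuration must have agent $i$ at $t_i$ for every $i$ simultaneously (since $t_i$ is only connectable when, and only reachable when, everyone is at $c$), and the configuration immediately before was $c$; truncating the execution at that point yields a reachability execution for $\langle G,c\rangle$. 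Since the construction is clearly computable in logarithmic space, PSPACE-hardness of \pbCoverageDir follows. (The combination with the NPSPACE=PSPACE upper bound of Theorem~\ref{prop:dir:ub:cover-reach} then gives PSPACE-completeness.)
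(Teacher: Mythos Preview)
Your overall strategy --- reduce from \pbReachDir, force the agents through the target configuration $c$ via per-coordinate trap nodes $t_i$ reachable only from $c_i$, then let them return to $B$ --- is the same high-level idea as the paper's, and with the ``chain the $t_i$'s'' option your backward direction works (note that your alternative ``make all $t_i$ communicate with $B$'' would \emph{not} work: then a single agent could reach $c_1$ alone, hop to $t_1$, and be connected, so the $t_i$'s could be covered one at a time without ever assembling the full configuration~$c$).

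The genuine gap is in the forward direction: even if $c$ is reachable in $G$, the witnessing execution need not visit every node of~$G$, so your covering execution of $G'$ is incomplete. None of your three proposed fixes closes this. A ``preliminary sweep'' of $G$ while staying connected is precisely the coverage problem you are trying to reduce \emph{to}; there is no w.l.o.g.\ assumption that all nodes of a \pbReachDir instance are visited on the way to~$c$; and ``add the missing nodes to the trap side'' is not well-defined because which nodes are missed depends on the (unknown) execution. The paper resolves this with an additional idea you are missing: after the agents pass through the $s_i$-layer (your $t_i$'s), they reach a further layer $v_1,\dots,v_k$ where $v_k$ communicates with \emph{every} node of $G'$ and $v_1$ has bidirectional movement edges to every node of~$G$. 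One agent parks at $v_k$ (providing universal connectivity), while another shuttles through $v_1$ to sweep all remaining nodes of $G$; only then does everyone return to~$B$. Without such a post-$c$ sweeping mechanism, your construction does not yield a covering execution of $G'$ from a reachability witness for~$c$.
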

\begin{proof}
  The proof is by reduction from \pbReachDir in which the base node has a
  self-loop. As noted in the remark following Theorem~\ref{th:dir:lb:reach},
  this problem remains PSPACE-hard.
  We map an instance $(G, c)$ of \pbReachDir
  to the instance $G'$ of \pbCoverageDir where $G'$ is depicted in
  Fig.~\ref{figure:topologicgraphproblemcoverage}. Let~$k$ denote the number
  of agents in the instance~$(G,c)$. $G'$ contains $G$ as a subgraph, plus fresh
  nodes $v_1, \dots, v_k$ and $s_1, \dots, s_k$. An agent can move from any node
  of $G$ to $v_1$ and back.

    \begin{figure}[t]
        \begin{center}
            \newcommand{\ymax}{5.3}
            \newcommand{\ybase}{2.8}
            \scalebox{0.8}
            {
                \begin{tikzpicture}[scale=0.9]
                    \draw[copyofG] (-0.2,\ybase-0.3) rectangle (8.2,3.5);
                    \draw (1.5,\ybase) node (nb) {B};
                    \draw (1,4) node (s1) {$s_1$};
                    \draw (3,4) node (s2) {$s_2$};
                    \draw (7,4) node (sk) {$s_k$};
                    \draw (1,3.2) node (t1) {$c_1$};
                    \draw (3,3.2) node (t2) {$c_2$};
                    \draw (7,3.2) node (tk) {$c_k$};
                    \draw (1,4.8) node (v1) {$v_1$};
                    \draw (3,4.8) node (v2) {$v_2$};
                    \draw (7,4.8) node (vk) {$v_k$};
                    \draw (8,5.3) node (all) {all};
                    \draw[-stealth] (t1) -- (s1);
                    \draw[-stealth] (t2) -- (s2);
                    \draw[-stealth] (tk) -- (sk);
                    \draw[tobehere,communication] (s1) -- (s2);
                    \draw[tobehere,communication] (s2) -- (4.5,4);
                    \draw[dotted,color=blue!50!white] (4.5,4) -- (5.5,4);
                    \draw[tobehere,communication] (sk) -- (5.5,4);
                    \draw[tobehere,communication] (t1) -- (t2);
                    \draw[tobehere,communication] (t2) -- (4.5,3.2);
                    \draw[dotted,color=blue!50!white] (4.5,3.2) -- (5.5,3.2);
                    \draw[tobehere,communication] (tk) -- (5.5,3.2);
                    \path[tobehere,communication] (s1) edge[bend right=70] (nb);
                    \draw[-stealth] (s1) -- (v1);
                    \draw[-stealth] (s2) -- (v2);
                    \draw[-stealth] (sk) -- (vk);
                    \draw[stealth-stealth] (v1) to[bend right] (0.1,3.4);
                    \path (vk) edge [loop right] (vk);
                    \draw[-stealth] (nb) to [loop right] (nb);
                    \draw [-stealth, rounded corners = 8pt] (vk) -- (7,\ymax) -- (-1,\ymax) -- (-1, \ybase)  -- (nb);
                    \draw (v1) -- (1,\ymax);
                    \draw (v2) -- (3,\ymax);
                    \draw[tobehere,communication] (vk) -- (all);
                    \node at (4, 2.7) {copy of $G$};
                \end{tikzpicture}
            }
            \caption{Topological graph $G'$ constructed from the \pbReachDir-instance.}
            \label{figure:topologicgraphproblemcoverage}
        \end{center}
    \end{figure}
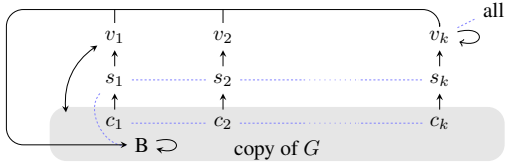
    Node $s_1$ can communicate with the base $B$, and node $v_k$ can communicate
    with all nodes of $G'$. Furthermore, we have the communication edges
    $(s_i,s_{i+1})$ and~$(v_i,v_{i+1})$ for all~$1\leq i\leq k-1$.
    Now we prove that the $k$ agents can progress to the
    configuration ($c_1,\dots,c_k$) in $G$  if and only if there exists a
    covering execution in~$G'$.

    ($\Rightarrow$) If the agents are in the configuration ($c_1,\dots,c_k$)
    then they can progress in one step to configuration ($s_1,\dots,s_k$). Then,
    they have no choice but progress to the configuration ($v_1,\dots,v_k$).
    Once in this configuration, the agent placed on the node $v_k$
    communicates with the base and with all other agents. 
    This agent stays at $v_k$. Meanwhile the agent placed on the
    node $v_1$ will visit all unvisited nodes of $G$ and come back to $v_1$
    while keeping communication to the base through the agent placed on $v_k$.
    Meanwhile, agents placed on $v_2,\dots,v_{k-1}$ come back to $B$. Finally,
    when all the nodes have been visited, both agents on $v_1$ and $v_k$ come
    back to $B$.

    \newcommand{\timesk}{t_{s_k}}

    ($\Leftarrow$) If there exists a covering execution of the whole graph $G'$,
    it means all nodes have been visited. In particular, node $s_k$ has been
    visited and let us consider the first time $\timesk$ when $s_k$ is visited.
    Time $\timesk-1$ denotes the time just before $\timesk$.

    \begin{fact} \label{fact:nodesoutsideGprimeunvisited}
      At time $\timesk-1$, no node $v_i$ and no node~$s_i$ were visited.
    \end{fact}

    \begin{proof}
        Suppose by contradiction that a node $v_i$ was visited by some agent
        before $\timesk$, then the only possibility such an agent to communicate
        to the base is that there is also an agent at $v_k$ at time $\timesk$.
        But then, it means that $s_k$ was visited strictly before $\timesk$,
        leading to a contradiction. Thus, no node $v_i$ were visited at time
        $\timesk$ (thus at time $\timesk-1$).

        As no node $v_i$ are visited before $\timesk$, no node $s_i$ are visited
        before $\timesk-1$.
    \end{proof}

    \begin{fact} \label{fact:reachc1ck}
        At time $\timesk-1$, the configuration is $\langle c_1, \dots, c_k\rangle$.
    \end{fact}

    \begin{proof}
        At time $\timesk$, as the agent at $s_k$ needs to communicate with the base,
        the only
        possibility is that the configuration is $\langle s_1, \dots, s_k\rangle$. Thus, the
        only possibility is that configuration is $\langle c_1, \dots, c_k\rangle$.
    \end{proof}

    Facts~\ref{fact:nodesoutsideGprimeunvisited} implies that the prefix from
    time 0 to time $\timesk-1$ of the covering execution is an execution in $G$.
    Fact~\ref{fact:reachc1ck} implies that sub-execution reaches $\langle
    c_1,\dots,c_k\rangle$.
\end{proof}




\section{Neighbor-Communicable Topological Graphs}
\label{sec:nc}

In this subsection, we show that our problems remain hard
for neighbor-communicable graphs.


\begin{theorem} 
  \label{thm:nc:lb:cover}
  \pbCoverageNC is PSPACE-complete.
\end{theorem}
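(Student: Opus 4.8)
Membership in PSPACE is inherited: every neighbor-communicable topological graph is in particular a topological graph, so \pbCoverageNC is a special case of \pbCoverageDir, which is in PSPACE by Theorem~\ref{prop:dir:ub:cover-reach}. The work is the PSPACE-hardness, and the plan is a reduction from \pbReachUnd (PSPACE-hard by Theorem~\ref{th:dir:lb:reach}) carried out in two stages: first turn the reachability instance into an equivalent \emph{neighbor-communicable} one, then plug it into a coverage gadget in the spirit of Lemma~\ref{lemma:dir:lb:cover} but redesigned so that the construction itself is neighbor-communicable.

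For the first stage I would subdivide every movement edge $(u,v)$ that is not already a communication edge, inserting a fresh node $m_{uv}$ with movement and communication edges $u - m_{uv}$ and $m_{uv} - v$ (keeping the self-loops and adding $v\coms v$ everywhere). A single step $c\moves c'$ of an execution in the original graph is then simulated by a short block of steps in which the agents cross the relevant midpoints one at a time; the point that needs care is that while an agent sits on $m_{uv}$ the configuration must stay connected, which I would ensure by having a neighbouring agent temporarily escort the crossing agent onto the vacated endpoint $u$ (this escorting bottoms out at the base, which carries a self-loop). Lengths grow, but \pbReachNC does not bound them, so reachability is preserved.

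For the second stage I keep the global shape of Figure~\ref{figure:topologicgraphproblemcoverage}: given the resulting \pbReachNC-instance $(G,c)$ with $k$ agents, $G'$ contains $G$ plus fresh ``ladder'' nodes $s_1,\dots,s_k$ and $v_1,\dots,v_k$ and a distinguished ``witness'' node that a covering execution is forced to visit last. Two ingredients of the original gadget are not neighbor-communicable and must be replaced. The universal relay node $v_k$, which communicated with all of $G'$, would in a neighbor-communicable graph also be movement-adjacent to everything; I would replace it by a \emph{relay corridor}, a path of fresh nodes carrying only consecutive movement/communication edges whose first node is adjacent to the base and whose last node is the witness, so that having a connected agent on the witness forces the whole corridor to be occupied and thus costs a fixed number of agents. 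The movement edges $c_i\to s_i$, $s_i\to v_i$ and the return edges $v_i\to\basenode$ would each have to become communication edges, which I avoid by inserting a fresh midpoint on each of them. The corridor length is chosen so that the agents not absorbed by the corridor are exactly enough to occupy $\langle c_1,\dots,c_k\rangle$. One then re-proves the analogues of Fact~\ref{fact:nodesoutsideGprimeunvisited} (at the first time $t$ that the witness is visited, none of the $v_i$ or $s_i$ has been visited, else the witness would have been reached earlier) and of Fact~\ref{fact:reachc1ck} (at time $t-1$ the agents outside the corridor form exactly $\langle c_1,\dots,c_k\rangle$): the prefix up to $t-1$ is then an execution of $G$ reaching $c$, and conversely a reaching execution of $G$ extends to a covering execution of $G'$ just as in Lemma~\ref{lemma:dir:lb:cover}.

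The main obstacle is re-establishing the analogue of Fact~\ref{fact:reachc1ck} under neighbor-communicability. Since $v\moves v'$ forces $v\coms v'$, an agent that stays behind on the predecessor of a node can serve as a relay for an agent placed on that node, so a priori the witness node could be reached from many configurations, not only from $\langle c_1,\dots,c_k\rangle$. The corridor length and the agent count have to be tuned — by a counting argument on how many agents any relay chain reaching the witness must use — so that no spare relay agent is ever available unless the other agents are pinned to $\langle c_1,\dots,c_k\rangle$; doing this while keeping the forward direction (the honest covering execution) feasible is the delicate part.
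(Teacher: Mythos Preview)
Your upper bound and high-level strategy are fine, and you correctly name the obstacle: neighbor-communicability turns every movement edge into a relay edge, so the gadget of Lemma~\ref{lemma:dir:lb:cover} no longer forces the agents through $\langle c_1,\dots,c_k\rangle$. But your Stage~2 fix does not close this gap. The relay corridor you describe is a fresh path attached at $\basenode$; it is disjoint from $c_1,\dots,c_k$ and from the ladder, so when the witness is first reached the $L$ agents filling the corridor say nothing about where the remaining $k$ agents are --- they could all be sitting at $\basenode$. A counting argument cannot help here, because it constrains \emph{how many} agents lie outside the corridor, not \emph{where} they are. Your analogue of Fact~\ref{fact:reachc1ck} therefore does not follow, and your analogue of Fact~\ref{fact:nodesoutsideGprimeunvisited} fails for the same reason: the corridor can be filled straight from $\basenode$ without touching the ladder at all. (Stage~1 is a separate detour: the paper reduces directly from a PSPACE-hard neighbor-communicable reachability instance, so no subdivision is needed; and the backward direction of your subdivision --- that reachability in the subdivided graph implies reachability in the original --- is not argued and not obvious, since each midpoint $m_{uv}$ adds a relay possibility between $u$ and $v$ that did not exist before.)

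The paper's mechanism is structural rather than numerical. Above the row $\langle c_1,\dots,c_k\rangle$ it stacks two grids of $k{+}1$ rows and $k$ columns, with upward movement along the columns and horizontal communication along each row; in the lower grid only the leftmost node of each row carries an extra communication edge to $\basenode$, in the upper grid only the rightmost does. Because each column has height $k{+}1$, the $k$ agents cannot span a full column even using the NC-induced vertical communication edges, and the left-to-right switch of the $\basenode$-anchor between the two grids rules out any partial or staggered climb. The only connected way up is for all $k$ agents to occupy one full row at every step; tracing this lockstep back $2(k{+}1)$ steps from $\langle v_1,\dots,v_k\rangle$ lands them exactly at $\langle c_1,\dots,c_k\rangle$. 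This row-synchronisation device is the idea your proposal is missing.
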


\begin{proof}
  The upper bound is given by Theorem~\ref{prop:dir:ub:cover-reach}.

  For the lower bound on \pbCoverageNC, the reduction given in
  Figure~\ref{figure:topologicgraphproblemcoverage} is not adapted 
  for neighbor communicable graphs.
  Indeed, all nodes may be visited  although $c_1, \dots, c_k$
  is not reached: $v_1$ and $v_k$ can be reached by two lines of agents
  connected to the base, making the coverage of the full graph possible.
  We nevertheless give a similar reduction by adapting the previous reduction.
  \iffull

  \begin{figure}[t]
        \begin{center}
            \newcommand{\xretourgauche}{-1}
            \tikzstyle{circlenode} =  [circle, draw=black, fill=black, inner sep=0.8mm]
            \newcommand{\ligne}[1]{
            \node[circlenode]  (s1) at (0,#1)  {};
            \node[circlenode]  (s1) at (0.3,#1)  {};
            \node[circlenode]  (s1) at (1,#1)  {};
            \node[circlenode] (s2) at (3,#1)   {};
            \node[circlenode]  (sk) at (7,#1)  {};
            \draw[-stealth] (1, #1-0.5) -- (s1);
            \draw[-stealth] (3, #1-0.5) -- (s2);
            \draw[-stealth] (7, #1-0.5) -- (sk);
            \draw[tobehere,communication] (s1) -- (s2);
            \draw[tobehere,communication] (s2) -- (4.5,#1);
            \draw[dotted,color=green] (4.5,#1) -- (5.5,#1);
            \draw[tobehere,communication] (sk) -- (5.5,#1);}

            \newcommand{\lignevdots}[1]{
            \node[]  (s1) at (1,#1)  {$\vdots$};
            \node[] (s2) at (3,#1)   {$\vdots$};
            \node[]  (sk) at (7,#1)  {$\vdots$};
            \draw[-stealth] (1, #1-0.7) -- (s1);
            \draw[-stealth] (3, #1-0.7) -- (s2);
            \draw[-stealth] (7, #1-0.7) -- (sk);
            }

            \newcommand{\vnodeY}{8.7}
            \newcommand{\posBaseY}{2.75}
            \scalebox{1}
            {
                \begin{tikzpicture}[xscale=0.75,yscale=0.9]
                \draw[copyofG] (-0.2,\posBaseY-0.3) rectangle (8.2,4);
                \draw (4,\posBaseY) node (nb) {B};
                \draw (1,3.75) node (t1) {$c_1$};
                \draw (3,3.75) node (t2) {$c_2$};
                \draw (7,3.75) node (tk) {$c_k$};

                \draw[tobehere,communication] (t1) -- (t2);
                \draw[tobehere,communication] (t2) -- (4.5,3.75);
                \draw[dotted,color=green] (4.5,3.75) -- (5.5,3.75);
                \draw[tobehere,communication] (tk) -- (5.5,3.75);

                \ligne {4.5}
                \addCommunicationB {s1}
                \lignevdots {5.25}
                \ligne {5.9}
                \addCommunicationB {s1}

                \node at (7.5, 5.8) {$s_k$};

                \draw[decoration={brace,mirror,raise=5pt},decorate]
                (8,4.3) -- node[right=10pt] {$k+1$} (8, 6);

                \ligne {6.5}
                \addCommunicationB {sk}
                \lignevdots {7.25}
                \ligne {7.9}
                \addCommunicationB {sk}

                \draw[decoration={brace,mirror,raise=5pt},decorate]
                (8,6.3) -- node[right=10pt] {$k+1$} (8,8);

                \draw (1,\vnodeY) node (v1) {$v_1$};
                \draw (3,\vnodeY) node (v2) {$v_2$};
                \draw (7,\vnodeY) node (vk) {$v_k$};
                \draw (8,\vnodeY+0.5) node (all) {all};
                \draw[-stealth] (s1) -- (v1);
                \draw[-stealth] (s2) -- (v2);
                \draw[-stealth] (sk) -- (vk);
                \draw[-stealth, rounded corners = 8pt] (v1) -- (0.3, \vnodeY-0.3) -- (0.3,4);
                \draw[stealth-, rounded corners = 8pt] (v1) -- (0, \vnodeY) -- (0,4);

                \path (vk) edge [loop right] (vk);
                \draw[-stealth] (nb) to [loop right] (nb);
                \draw [-stealth, rounded corners = 8pt] (vk) -- (7,\vnodeY+0.5) -- (\xretourgauche,\vnodeY+0.5) -- (\xretourgauche, \posBaseY)  -- (nb);
                \draw (v1) -- (1,\vnodeY+0.5);
                \draw (v2) -- (3,\vnodeY+0.5);

                \draw[tobehere,communication] (vk) -- (all);
                \node at (4, 3.25) {copy of $G$};
                \end{tikzpicture}
            }
            \caption{Topological graph of the \pbCoverageNC-instance constructed from the \pbReachNC-instance for the case of neighbor-communicable topological graph.}
            \label{figure:topologicgraphproblemcoveragefixedforneighborcommunicable}
        \end{center}
    \end{figure}

	The corrected construction is given in Figure
	\ref{figure:topologicgraphproblemcoveragefixedforneighborcommunicable}. When
	configuration $\langle c_1,...,c_k\rangle$ is reached, the agents go through
	a first layer of length $k+1$ in which the first agent can communicate with
	$B$. Then they go through another layer of length $k+1$ in which the
	$k^{th}$ agent can communicate with $B$. This way, it is mandatory that all
	agents move at the same time to visit $\langle v_1,\dots,v_k\rangle$. Once
	the $k^{th}$ agent is at $v_k$, all agents can communicate with $B$ wherever
	they are, so they can visit remaining states in the copy of $G$. Now let us
	prove that $\langle c_1, \dots, c_k\rangle$ is reachable in $G$ iff it is
	possible to cover all nodes in $G'$.

	($\Rightarrow$) If $\langle c_1, \dots, c_k\rangle$ is reachable in $G$,
	then we extend the execution to reach $\langle v_1, \dots, v_k\rangle)$ and
	by the same trick as in Figure \ref{figure:topologicgraphproblemcoverage},
	the agent that reaches $v_1$ visits all the remaining unvisited nodes in
	$G$. Thus, we extend the execution for covering all nodes in $G'$.
	
	$(\Leftarrow)$ Suppose all nodes are visited in $G'$. In particular, $v_1$
	and $v_k$ are visited. Let us consider the first moment $t_{v_i}$ when a
	node $v_i$ is visited.
	
	\begin{fact} \label{fact:nodesv1vk}
		At the first moment, the configuration of the agents is $\langle v_1,
		\dots, v_k\rangle$.
	\end{fact}
	
	\begin{proof}
		Let us prove that there is an agent at $v_k$. Suppose that at that
		moment there is no agent at $v_k$. Due to the topological graph $G'$, the
		agent at $v_i$ is disconnected from the base since  nodes that
		communicate directly to $B$ are too far from $v_i$: indeed, the top
		$k+1$-grid is too long and, for $i=1$, the path on left between $v_1$
		and the copy of $G$ is too long. Contradiction.
		
		The agent at $v_k$ came from the unique $2k+2$-long path from $c_k$ to
		$v_k$. Actually, $k+1$ steps before - let us call this moment $t_{s_k}$,
		she was on $s_k$. But at that time, due to the topological graph, there
		are $k$ agents on the row containing $s_k$, otherwise the agent at $s_k$
		would have been disconnected from the base (the bottom $k+1$-grid is too
		long).
		
		So $k+1$ times later $t_{s_k}$, all the $k$ agents are at $\langle v_1,
		\dots, v_k\rangle$.
	\end{proof}
	
	Taking Fact \ref{fact:nodesv1vk} as granted, we consider time $t$ that is
	$2k+2$ steps before and we clearly have the following fact.
	
	\begin{fact}
		At time $t$, the configuration is $\langle c_1, \dots, c_k\rangle$.
	\end{fact}

	Moreover, the following fact holds.
	
	\begin{fact}
		At time $t$, no node outside $G$ were visited.
	\end{fact}
	
	\begin{proof}
		By contradiction, if some node outside $G$ were visited, it means that
		some agent went out the copy of $G$. By definition of $G'$, it would
		mean that a node $v_i$ would have been visited, before time $t$, hence
		strictly before $t_{v_i}$. Contradiction.
	\end{proof}

	To sum up, the prefix of the execution from $\langle B, \dots, B\rangle$ to
	$\langle c_1, \dots, c_k\rangle$ is fully inside the copy of $G$. So
  $\langle c_1, \dots, c_k\rangle$ is reachable in $G$.
  
  \else

  The details are given in the long version.

  \fi

\end{proof}




\section{Sight-Moveable Topological Graphs}
\label{sec:sm}


In this subsection, we show that \pbReachSM and \pbCoverageSM
are in LOGSPACE while the bounded version \pbBReachSM is
NP-complete.

\subsection{Upper bounds}

The results in this subsection are based on a result of Reingold~\cite{Reingold:2008}, who
proved that the problem of checking the connectivity of two nodes $s$ and $t$ in an
undirected graph, namely \USTCONN, is in LOGSPACE.

\begin{theorem}
	\label{theorem:USTCONNinLOGSPACE}
	\USTCONN~ is in LOGSPACE \cite{Reingold:2008}.
\end{theorem}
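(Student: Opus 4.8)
The plan is to invoke the theorem of Reingold \cite{Reingold:2008}, which we use as a black box; for completeness we sketch its proof. The idea is to transform, in logarithmic space, an arbitrary undirected graph into one whose connected components are constant-degree expanders, and then to exploit the fact that such expanders have logarithmic diameter, so that connectivity can be decided by enumerating short walks.

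First I would reduce to the case of a constant-degree graph: replacing each vertex $v$ of degree $d_v$ by a simple cycle of length $d_v$ and wiring the original incident edges to the cycle vertices yields a $3$-regular graph with the same partition into connected components, and this transformation is plainly computable in logspace.

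Next comes the core construction. One builds a sequence $G_0, G_1, \dots, G_\ell$ with $\ell = O(\log n)$, where $G_0$ is the $3$-regular graph above and each $G_{i+1}$ is obtained from $G_i$ by first raising it to a fixed constant power $t$ (which improves the spectral gap at the cost of raising the degree to a constant $D^t$) and then taking the zig-zag product with a single fixed expander $H$ on $D^t$ vertices (which brings the degree back down to the constant $D$ while only mildly degrading the gap). A careful choice of parameters makes the spectral gap grow geometrically until, after $O(\log n)$ steps, it is bounded below by an absolute constant; hence $G_\ell$ has constant degree and each of its connected components is an expander. Since a constant-degree expander on $m$ vertices has diameter $O(\log m)$, connectivity of the images $s_\ell$ and $t_\ell$ of $s$ and $t$ in $G_\ell$ is decided by enumerating all walks of length $O(\log n)$ out of $s_\ell$ --- each walk being a string of $O(\log n)$ edge-labels of constant size --- and checking whether any reaches $t_\ell$; this enumeration uses only logarithmic space.

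The main obstacle, and the heart of Reingold's argument, is to show that a single neighbor (rotation-map) query in the final graph $G_\ell$ can be answered in logarithmic space even though $\ell$ is itself logarithmic. One never materializes the intermediate graphs; instead a query to $G_{i+1}$ is resolved by a constant number of recursive queries to $G_i$, and each level of the recursion contributes only $O(1)$ bits of workspace (the constant bookkeeping of the zig-zag step together with a vertex of the constant-size graph $H$), so the total space over the $O(\log n)$ levels remains $O(\log n)$. Combining this with the logspace preprocessing and the logspace enumeration of short walks yields a logspace decision procedure for \USTCONN, which is the claim (and in particular shows $\mathrm{SL} = \mathrm{L}$).
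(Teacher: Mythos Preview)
The paper does not prove this theorem at all: it merely states it with a citation to \cite{Reingold:2008} and then uses it as a black box in the subsequent propositions. Your proposal already begins by doing exactly that (``we use it as a black box''), which matches the paper; the additional sketch you supply of Reingold's zig-zag construction is accurate as a high-level outline but goes well beyond anything the paper contains or needs.
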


\begin{proposition}\label{prop:sm:ub:reach}
	\pbReachSM is in LOGSPACE.
\end{proposition}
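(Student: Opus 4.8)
The plan is to show that, over sight-moveable graphs, reachability collapses to a pure connectivity question that Reingold's algorithm (Theorem~\ref{theorem:USTCONNinLOGSPACE}) answers in logarithmic space. The key claim is a characterization: in a sight-moveable graph $G$, a configuration $c=\langle c_1,\dots,c_n\rangle$ is reachable from $\langle \basenode,\dots,\basenode\rangle$ if and only if $c$ is a valid configuration in the sense of Definition~\ref{def:config}, i.e.\ the subgraph of $\langle V,\coms\rangle$ induced by $V_a=\{\basenode,c_1,\dots,c_n\}$ is connected. One direction is immediate: every configuration occurring along an execution is connected by definition, so in particular the last one is. For the converse, since in a sight-moveable graph $\moves$ is symmetric and reflexive, it suffices to exhibit an execution from $c$ to $\langle\basenode,\dots,\basenode\rangle$ and then reverse it.

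To build that execution I would induct on the number of nodes other than $\basenode$ carrying an agent in $c$. If it is $0$, nothing is needed. Otherwise fix a spanning tree $T$ of the induced graph on $V_a$ rooted at $\basenode$; it has a leaf $u\neq\basenode$ (any tree with at least two vertices does), with parent $p$, so $p\coms u$. By sight-moveability there is a movement path $\rho_1=p,\dots,\rho_m=u$ with $\rho_i\moves\rho_{i+1}$ and $p\coms\rho_i$ for all $i$. Walking this path backwards, from $u$ to $p$ (legal since $\moves$ is symmetric), I would send every agent located at $u$ to $p$, one at a time, keeping all other agents fixed (legal since $\moves$ is reflexive). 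Connectivity holds at every intermediate configuration: the occupied set always contains $V_a\setminus\{u\}$, which is connected because $u$ is a leaf of $T$ and which contains $p$, while the agent in transit, at some $\rho_i$, is linked to $p$ via $p\coms\rho_i$. When $u$ has been emptied, the configuration is valid and exactly one fewer node other than $\basenode$ carries an agent (node $u$ became empty and only $p\in V_a$ gained agents), so the induction hypothesis applies.

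Granting the characterization, the algorithm is: on input $(G,c)$ with $G$ sight-moveable, accept iff $\basenode$ and $c_i$ are connected in $H=\langle V_a,\coms\cap(V_a\times V_a)\rangle$ for every $i\in\{1,\dots,n\}$; this is precisely the test that $H$ is connected, hence that $c$ is a valid configuration. The adjacency relation of $H$ is computable in logarithmic space from the input (testing membership in $V_a$ and in $\coms$ is trivial), so each of the $n$ tests is a \USTCONN~query solvable in logarithmic space by Theorem~\ref{theorem:USTCONNinLOGSPACE}; performing them one at a time while reusing the work tape, together with an $O(\log n)$-bit counter, keeps the whole procedure in LOGSPACE. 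No execution is ever enumerated, which is what makes the bound logarithmic rather than merely polynomial.

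The main obstacle is the characterization lemma, and more precisely the connectivity bookkeeping in its proof: one must verify that the fleet stays connected at \emph{every} intermediate configuration while an agent is walked along the realizing path, and this is exactly where the defining clause of sight-moveable graphs ($p\coms\rho_i$ along the whole path realizing $p\coms u$) is indispensable, guaranteeing that the moving agent never loses contact with the stationary part of the fleet, which itself remains a connected subtree anchored at the base.
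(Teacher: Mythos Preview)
Your proposal is correct and follows essentially the same approach as the paper: both reduce \pbReachSM to undirected connectivity of the $\coms$-induced subgraph on $\{\basenode,c_1,\dots,c_n\}$ and invoke Reingold's theorem. Your leaf-pruning induction on a spanning tree is a more careful justification of the characterization than the paper's one-line sketch (``each agent follows some $\moves$-path from $\basenode$ to $c_i$ contained in a $\coms$-path''), but the underlying idea and the resulting algorithm are identical.
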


\begin{proof}
  The idea of the proof is to reduce \pbReachSM to \UCONN, that is the problem
  of deciding whether an undirected graph is connected. From
  Theorem~\ref{theorem:USTCONNinLOGSPACE}, we can reduce \UCONN~ to \USTCONN~
  by simply looping over all pairs of nodes ($\sourcenode, \targetnode)$ and
  checking for a path from $\sourcenode$ to $\targetnode$. Therefore, \UCONN~
  is in LOGSPACE.

  Now we describe the logarithmic space reduction of \pbReachSM
  \linebreak[4]
  to \UCONN. Let $G=\langle \setnodes, \moves, \coms \rangle$ a sight-moveable
    topological graph and $c$ a configuration. Let $\setnodes' = \set{c_1,
    \dots, c_n, \basenode}$. The configuration $c$ is reachable iff the
    restriction of $\myundirectedgraph := (\setnodes, \coms)$ to the nodes in
    $\setnodes'$ is $\coms$-connected. Indeed, if it is, then $c$ is reachable:
    each agent follows some $\moves$-path from $\basenode$ to $c_i$ contained in
    a $\coms$-path from $\basenode$ to $c_i$. In other words, $(G, c)$ is a
    positive \pbReachSM-instance iff $\myundirectedgraph$ is a positive
    \UCONN-instance. The reduction is in logarithmic space: we compute
    $\myundirectedgraph$ by enumerating all $(u, v)$ $\coms$-edges in $G$, and
    we output $(u, v)$ when $u, v \in \setnodes'$. We recall that we only take
    into account the working memory for computing $\myundirectedgraph$; the
    output -- $\myundirectedgraph$ itself -- is not taken into account in the
    used space (see e.g. \cite{DBLP:books/daglib/0086373}, Ch. 8, Def. 8.21).
\end{proof}



\begin{proposition}\label{prop:sm:ub:cover}
  \pbCoverageSM is in LOGSPACE.
\end{proposition}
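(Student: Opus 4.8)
The plan is to prove a combinatorial characterization of coverability and then check it in logarithmic space, mirroring the treatment of \pbReachSM. For $v\in\setnodes$ let $\delta(v)$ be the length of a shortest path from $\basenode$ to $v$ in the undirected graph $(\setnodes,\coms)$, with $\delta(v)=\infty$ if $v$ is not $\coms$-connected to $\basenode$. The characterization I would prove is: a covering execution with $n$ agents exists if and only if $\delta(v)\le n$ for every $v\in\setnodes$; equivalently, the $\coms$-ball of radius $n$ centered at $\basenode$ is all of $\setnodes$ (which in particular forces $(\setnodes,\coms)$ to be connected). Granting this, \pbCoverageSM reduces to testing, for each $v\in\setnodes$, whether $v$ lies within $\coms$-distance $n$ of $\basenode$, using only a constant number of $O(\log|\setnodes|)$-bit counters on top of one bounded-distance reachability query in an undirected graph whose length bound $n$ is given in unary; that query is in LOGSPACE by Reingold's techniques (and when $n\ge|\setnodes|$ it degenerates to the plain \UCONN\ test already used for \pbReachSM), so the whole procedure is in LOGSPACE.

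For the ``only if'' direction, suppose a covering execution with $n$ agents exists and let $v$ be occupied at some time $t$ in configuration $c^t$. Since $c^t$ is a configuration, the subgraph of $(\setnodes,\coms)$ induced on $\{\basenode,c^t_1,\dots,c^t_n\}$ is connected, so it contains a simple path $\basenode=w_0,w_1,\dots,w_\ell=v$; the nodes $w_1,\dots,w_\ell$ are pairwise distinct and all different from $\basenode$, hence each is occupied by at least one agent, so $\ell\le n$, and as $w_0,\dots,w_\ell$ is a $\coms$-path we get $\delta(v)\le\ell\le n$. Every $v\in\setnodes$ is visited, so $\delta(v)\le n$ for all $v$.

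For the ``if'' direction, assume $\delta(v)\le n$ for every $v$. I would build the covering execution by visiting the nodes one at a time, keeping at most $\delta(v)\le n$ agents away from $\basenode$ at any moment and the rest parked at $\basenode$. To visit a fixed $v$, pick a shortest $\coms$-path $\basenode=u_0\coms u_1\coms\dots\coms u_d=v$ with $d=\delta(v)$; then for $i=1,\dots,d$ in turn, route agent $i$ from $\basenode$ to $u_i$ by successively sliding $\basenode\to u_1\to\dots\to u_i$, where the slide $u_{j-1}\to u_j$ (resp.\ $\basenode\to u_1$) uses the movement path granted by sight-moveability for the communication edge $u_{j-1}\coms u_j$ (resp.\ $\basenode\coms u_1$), all of whose intermediate nodes are $\coms$-adjacent to $u_{j-1}$ (resp.\ $\basenode$), while agents $1,\dots,j-1$ stay put via self-loops at $u_1,\dots,u_{j-1}$ and agents $i+1,\dots,n$ stay at $\basenode$. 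Every intermediate configuration is $\coms$-connected to $\basenode$: if the moving agent is momentarily at node $\rho$ with $u_{j-1}\coms\rho$, the $\coms$-path $\basenode,u_1,\dots,u_{j-1},\rho$ witnesses connectivity. Once $v$ is occupied, retract the chain to $\langle\basenode,\dots,\basenode\rangle$ by reversing the moves, which is legal because $\coms$ is symmetric, so each sliding path also stays in sight of its other endpoint. Concatenating one such phase per node of $\setnodes$ yields a single execution starting and ending at $\langle\basenode,\dots,\basenode\rangle$ that visits every node, i.e.\ a covering execution with $n$ agents.

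The step I expect to be the main obstacle is the LOGSPACE implementation of the bounded-distance test ``$\delta(v)\le n$'' with $n$ in unary: unlike plain \UCONN, it does not reduce to \USTCONN\ via the obvious layered-graph gadget (undirected edges let one ``waste'' steps and effectively replenish the step budget), so one must either invoke Reingold's derandomized-squaring machinery and track how it scales distances, or cite directly that short-distance reachability in undirected graphs (unary bound) is in LOGSPACE. The chain construction in the ``if'' direction is the other place where care is needed, to certify that every single step really is a legal configuration transition.
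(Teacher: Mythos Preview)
Your plan is exactly the paper's: characterize \pbCoverageSM by ``every vertex is at $\coms$-distance at most~$n$ from~$\basenode$'' and then test that in log space. You supply a full proof of the characterization in both directions, whereas the paper simply asserts it in one line; your argument is correct. One minor wording issue in your ``if'' direction: the claim that ``each sliding path also stays in sight of its other endpoint'' is not what sight-moveability gives you, but you do not actually need it---on the retraction from $u_j$ to $u_{j-1}$ you can reuse the \emph{forward} sight-moveable path (for the edge $u_{j-1}\coms u_j$) in reverse, all of whose nodes communicate with $u_{j-1}$, which is still occupied by agent~$j-1$.

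The interesting divergence is the bounded-distance step. The paper does \emph{not} leave it open: it states a lemma that Bounded-\USTCONN\ is in LOGSPACE and ``proves'' it via precisely the undirected layered gadget you reject---nodes $(v,j)$, undirected edges $\{(v,j),(v',j+1)\}$ whenever $v\sim v'$ or $v=v'$, then test whether $(s,0)$ and $(t,n)$ are connected. Your instinct is right and the paper's reduction is wrong: the self-loop edges let one bounce between layers, so $(s,0)$ and $(t,n)$ are connected in the gadget iff $s$ and $t$ are connected in the original graph, regardless of distance (on the path $s\text{--}a\text{--}t$ with $n=1$: $(s,0)\sim(a,1)\sim(t,0)\sim(t,1)$, yet $\mathrm{dist}(s,t)=2$). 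Worse, Bounded-\USTCONN\ with unary bound is NL-hard: from a directed instance $(H,s,t)$ of STCONN, build the undirected layered graph with edges $\{(u,j),(v,j+1)\}$ for each arc $u\to v$ of~$H$ and for $u=v$, and ask whether $\mathrm{dist}((s,0),(t,|V_H|))\le |V_H|$; any path of length $\le |V_H|$ between these two nodes must increase the layer at every step and hence encodes a directed $s$--$t$ walk. So the paper's lemma, as stated, would imply $\mathrm L=\mathrm{NL}$. In short: you and the paper take the same route; you are more careful on the characterization; and the obstacle you flag is a genuine gap that the paper's own proof does not close.
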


\begin{proof}
  First we prove that the bounded version of the connectivity in undirected
  graphs is also in LOGSPACE.

  \begin{lemma}
    \label{lemma:boundedUSTCONNinLOGSPACE}
    Bounded-\USTCONN, that is the problem, giving an undirected graph
    $\myundirectedgraphbounded$, two nodes $\sourcenode, \targetnode$, an
    integer $n$ written in binary, of deciding whether there is a path of
    length at most $n$ from $\sourcenode$ to $\targetnode$ in $G$ is in
    LOGSPACE.
  \end{lemma}

  \begin{proof} We reduce Bounded-\USTCONN~ to \USTCONN~ in logarithmic space as
    follows. From a Bounded-\USTCONN~ instance $(\myundirectedgraphbounded,
    \sourcenode, \targetnode, n)$ we construct in logarithmic space a
    \USTCONN~ instance $(\myundirectedgraph, \sourcenode', \targetnode')$:
    \begin{inparaenum}
    \item The nodes of $\myundirectedgraph$ are pairs $(v, j)$ where $v$
      is a node of $\myundirectedgraphbounded$ and $j$ is an integer in
      $\set{0,n}$ but smaller than the number of nodes in
      $\myundirectedgraph$;
    \item $\myundirectedgraph$ contains an edge between $(v, j)$ and
      $(v', j+1)$ when there is an edge between $v$ and $v'$ in
      $\myundirectedgraphbounded$ or when $v = v'$;
    \item $\sourcenode' = (\sourcenode, 0)$ and $\targetnode' =
      (\targetnode', n)$.
    \end{inparaenum}
  \end{proof}

  Now let $G=\langle V, \moves, \coms \rangle$ be a sight-moveable topologic
  graph and $n$ an integer written in binary. To test whether $(G, n)$ is a
  positive instance of Coverage, it suffices to check that there is a path
  from any node $v$ to the base $B$ with at most $n$ communication edges. To
  do that, we test sequentially, for all $v$, that $((V, \coms), v, B, n)$ is
  a positive instance of Bounded-\USTCONN~. Thus, we obtain an algorithm in
  logarithmic space to decide Coverage.

\end{proof}


\subsection{Lower bounds}

We now focus on the NP lower bound of \pbBReachSM. 



\iffull
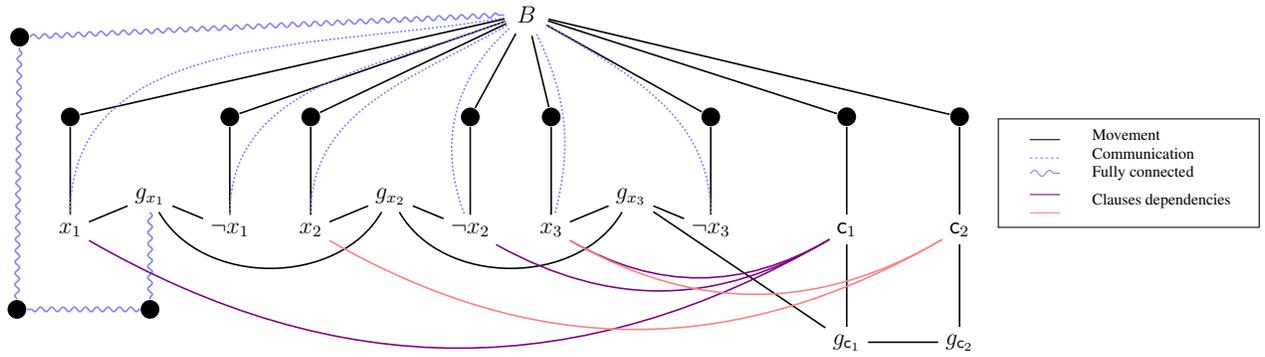
\begin{figure*}
    \begin{center}
    \vspace{1cm}
    \scalebox{0.75}{
    \begin{tikzpicture}[node distance=2cm,thick,
      main node/.style={font=\sffamily\Large\bfseries,minimum size=0.5cm},
      staging node/.style={circle,fill=black}]
  
      \node[main node,circle] (B) {$\basenode$};
      \draw[use as bounding box] (B.center)+(4cm,0);
      \node (VAR) [below left=0cm and 2cm of B] {};
      \node (VAR1) [left=4cm of VAR] {};
      \node (VAR3) [right=4cm of VAR] {};
  
      \node[staging node] (SX1) [below left of=VAR1] {};
      \node[main node] (X1) [below of=SX1] {$\booleanvariablegeneric_1$};
      \node[staging node] (SNX1) [below right of=VAR1] {};
      \node[main node] (NX1) [below of=SNX1] {$\neg \booleanvariablegeneric_1$};
      \node[main node] (GX1) [below right of=SX1] {$g_{\booleanvariablegeneric_1}$};
  
      \node[staging node] (SX2) [below left of=VAR] {};
      \node[main node] (X2) [below of=SX2] {$\booleanvariablegeneric_2$};
      \node[staging node] (SNX2) [below right of=VAR] {};
      \node[main node] (NX2) [below of=SNX2] {$\neg \booleanvariablegeneric_2$};
      \node[main node] (GX2) [below right of=SX2] {$g_{\booleanvariablegeneric_2}$};
  
      \node[staging node] (SX3) [below left of=VAR3] {};
      \node[main node] (X3) [below of=SX3] {$\booleanvariablegeneric_3$};
      \node[staging node] (SNX3) [below right of=VAR3] {};
      \node[main node] (NX3) [below of=SNX3] {$\neg \booleanvariablegeneric_3$};
      \node[main node] (GX3) [below right of=SX3] {$g_{\booleanvariablegeneric_3}$};
  
      \node[staging node] (P3) [below of=GX1] {};
      \node[staging node] (P2) [left=2cm of P3] {};
      \node[staging node] (P1) [left=2cm of VAR1] {};
  
      \node (CLA) [below right=0cm and 3cm of B] {};
      \node (CLA1) [right=2cm of CLA] {};
      \node (CLA2) [right=4cm of CLA] {};
      
      \node[staging node] (SC1) [below=1.1cm of CLA1] {};
      \node[main node] (C1) [below of=SC1] {$\clausegeneric_1$};
      \node[main node] (GC1) [below of=C1] {$g_{\clausegeneric_1}$};
  
      \node[staging node] (SC2) [below=1.1cm of CLA2] {};
      \node[main node] (C2) [below of=SC2] {$\clausegeneric_2$};
      \node[main node] (GC2) [below of=C2] {$g_{\clausegeneric_2}$};
      
      \foreach \i in {1,...,3}{\path (B) edge (SX\i);}
      \foreach \i in {1,...,3}{\path (B) edge (SNX\i);}
      \foreach \i in {1,...,3}{\path (SX\i) edge (X\i);}
      \foreach \i in {1,...,3}{\path (SNX\i) edge (NX\i);}
      \foreach \i in {1,...,3}{\path (X\i) edge (GX\i);}
      \foreach \i in {1,...,3}{\path (NX\i) edge (GX\i);}
  
      \foreach \i in {1,...,2}{\path (B) edge (SC\i);}
      \foreach \i in {1,...,2}{\path (SC\i) edge (C\i);}
      \foreach \i in {1,...,2}{\path (C\i) edge (GC\i);}
  
      \path[communication,tobehere]
      (B) edge [out=190, in=90] (X1)
      edge [out=200, in=90] (NX1)
      edge [out=210, in=90] (X2)
      edge [bend right=35] (NX2)
      edge [bend left=20] (X3)
      edge [out=330, in=90] (NX3)
      ;
      \path
      (GX1) edge [bend right=60] (GX2)
      (GX2) edge [bend right=60] (GX3)
      (GX3) edge (GC1)
      (GC1) edge (GC2);
      \draw[tobehere,decorate,decoration={snake,amplitude=.4mm,segment length=2mm}] (GX1) -- (P3);
      \draw[tobehere,decorate,decoration={snake,amplitude=.4mm,segment length=2mm}] (P3) -- (P2);
      \draw[tobehere,decorate,decoration={snake,amplitude=.4mm,segment length=2mm}] (P2) -- (P1);
      \draw[tobehere,decorate,decoration={snake,amplitude=.4mm,segment length=2mm}] (P1) -- (B);
      
      \path
  
      (C1) edge [bend left, blue!50!red](X1)
      edge [bend left, blue!50!red] (NX2)
      edge [bend left, blue!50!red] (X3)
  
      (C2) edge [bend left, red!50!white] (X2)
      edge [bend left, red!50!white] (X3)
      
      ;
    \end{tikzpicture}
    }
    
    \hspace{13.2cm}		
    \raisebox{-1.5cm}{
      \fbox{
        \tiny
        \begin{tabular}{ll}
          \tikz{\draw (0, 0) edge (0.4, 0);} & Movement \\
          \tikz{\draw[communication, tobehere] (0, 0) edge (0.4, 0);} & Communication \\
          \tikz{\draw[tobehere,decorate,decoration={snake,amplitude=.4mm,segment
          length=2mm}] (0, 0) -- (0.4, 0);} & Fully connected \\
          \tikz{\draw[blue!50!red] (0, 0) edge (0.4, 0);} & \multirow{2}{*}{Clauses dependencies}\\
          \tikz{\draw[red!50!white] (0, 0) edge (0.4, 0);} &
        \end{tabular}
        }
    }
    \end{center}
    \vspace{1.8cm}
    \caption{Graph construction from the formula $(x_1 \vee \neg x_2 \vee x_3) \wedge (x_2 \vee x_3)$.}
    \label{fig:cons}
  \end{figure*}

  \begin{figure}
    \centering
    \begin{subfigure}{.23\textwidth}
      \centering
      \begin{tikzpicture}[node distance=1.5cm,
                      thick,
                      main node/.style={,font=\sffamily\Large\bfseries,minimum size=0.5cm},
                      staging node/.style={circle,fill=black}]
  
        \node[main node] (1) {$\basenode$};
        \node[staging node] (2) [below left of=1]{};
        \node[left of=2, node distance=0.4cm] {$n_x$};
        \node[main node] (3) [below of=2] {$\booleanvariablegeneric$};
        \node[staging node] (4) [below right of=1] {};
        \node[right of=4, node distance=0.5cm] {$n_{\lnot x}$};
        \node[main node] (5) [below of=4] {$\neg \booleanvariablegeneric$};
  
        \node[main node] (6) [below right of=2] {$g_\booleanvariablegeneric$};
  
        \path[every node/.style={font=\sffamily\small}]
        (1) edge (2)
        (2) edge (3)
        (3) edge (6)
  
        (1) edge (4)
        (4) edge (5)
        (5) edge (6)
        ;
  
        \path[communication,tobehere]
        (1) edge (3)
        (1) edge (5)
        ;
    \end{tikzpicture}
    \caption{Variable gadget.}
    \label{fig:Gvar}
    \end{subfigure}%
    \begin{subfigure}{.23\textwidth}
      \centering
      \begin{tikzpicture}[node distance=0.9cm,
                      thick,
                      main node/.style={font=\sffamily\Large\bfseries,minimum size=0.5cm},
                      staging node/.style={circle,fill=black}]
  
        \node[main node] (1) {$\basenode$};
        \node[staging node] (2) [below of=1] {};
        \node[left of=2,node distance=0.4cm] {$n_c$};
        \node[main node] (3) [below of=2] {$\clausegeneric$};
        \node[main node] (4) [below of=3] {$g_\clausegeneric$};
  
        \path[every node/.style={font=\sffamily\small}]
        (1) edge (2)
        (2) edge (3)
        (3) edge (4)
        ;
  
    \end{tikzpicture}
    \caption{Clause gadget.}
    \label{fig:Gcla}
    \end{subfigure}
    \caption{Translation gadgets}
  \end{figure}
\else
\begin{figure*}
    \begin{center}
    \vspace{1cm}
    \scalebox{0.75}{
    \begin{tikzpicture}[yscale=0.5,node distance=1.6cm,thick,
      main node/.style={font=\sffamily\Large\bfseries,minimum size=0.5cm},
      staging node/.style={circle,fill=black}]
  
      \node[main node,circle] (B) {$\basenode$};
      \draw[use as bounding box] (B.center)+(4cm,0);
      \node (VAR) [below left=0cm and 2cm of B] {};
      \node (VAR1) [left=4cm of VAR] {};
      \node (VAR3) [right=4cm of VAR] {};
  
      \node[staging node] (SX1) [below left of=VAR1] {};
      \node[main node] (X1) [below of=SX1] {$\booleanvariablegeneric_1$};
      \node[staging node] (SNX1) [below right of=VAR1] {};
      \node[main node] (NX1) [below of=SNX1] {$\neg \booleanvariablegeneric_1$};
      \node[main node] (GX1) [below right of=SX1] {$g_{\booleanvariablegeneric_1}$};
  
      \node[staging node] (SX2) [below left of=VAR] {};
      \node[main node] (X2) [below of=SX2] {$\booleanvariablegeneric_2$};
      \node[staging node] (SNX2) [below right of=VAR] {};
      \node[main node] (NX2) [below of=SNX2] {$\neg \booleanvariablegeneric_2$};
      \node[main node] (GX2) [below right of=SX2] {$g_{\booleanvariablegeneric_2}$};
  
      \node[staging node] (SX3) [below left of=VAR3] {};
      \node[main node] (X3) [below of=SX3] {$\booleanvariablegeneric_3$};
      \node[staging node] (SNX3) [below right of=VAR3] {};
      \node[main node] (NX3) [below of=SNX3] {$\neg \booleanvariablegeneric_3$};
      \node[main node] (GX3) [below right of=SX3] {$g_{\booleanvariablegeneric_3}$};
  
      \node[staging node] (P3) [below of=GX1] {};
      \node[staging node] (P2) [left=2cm of P3] {};
      \node[staging node] (P1) [left=2cm of VAR1] {};
  
      \node (CLA) [below right=0cm and 3cm of B] {};
      \node (CLA1) [right=2cm of CLA] {};
      \node (CLA2) [right=4cm of CLA] {};
      
      \node[staging node] (SC1) [below=1.1cm of CLA1] {};
      \node[main node] (C1) [below of=SC1] {$\clausegeneric_1$};
      \node[main node] (GC1) [below=0.8cm of C1] {$g_{\clausegeneric_1}$};
  
      \node[staging node] (SC2) [below=1.1cm of CLA2] {};
      \node[main node] (C2) [below of=SC2] {$\clausegeneric_2$};
      \node[main node] (GC2) [below=0.8cm of C2] {$g_{\clausegeneric_2}$};
      
      \foreach \i in {1,...,3}{\path (B) edge (SX\i);}
      \foreach \i in {1,...,3}{\path (B) edge (SNX\i);}
      \foreach \i in {1,...,3}{\path (SX\i) edge (X\i);}
      \foreach \i in {1,...,3}{\path (SNX\i) edge (NX\i);}
      \foreach \i in {1,...,3}{\path (X\i) edge (GX\i);}
      \foreach \i in {1,...,3}{\path (NX\i) edge (GX\i);}
  
      \foreach \i in {1,...,2}{\path (B) edge (SC\i);}
      \foreach \i in {1,...,2}{\path (SC\i) edge (C\i);}
      \foreach \i in {1,...,2}{\path (C\i) edge (GC\i);}
  
      \path[communication,tobehere]
      (B) edge [out=190, in=90] (X1)
      edge [out=200, in=90] (NX1)
      edge [out=210, in=90] (X2)
      edge [bend right=35] (NX2)
      edge [bend left=20] (X3)
      edge [out=330, in=90] (NX3)
      ;
      \path
      (GX1) edge [bend right=75] (GX2)
      (GX2) edge [bend right=75] (GX3)
      (GX3) edge (GC1)
      (GC1) edge (GC2);
      \draw[tobehere,decorate,decoration={snake,amplitude=.4mm,segment length=2mm}] (GX1) -- (P3);
      \draw[tobehere,decorate,decoration={snake,amplitude=.4mm,segment length=2mm}] (P3) -- (P2);
      \draw[tobehere,decorate,decoration={snake,amplitude=.4mm,segment length=2mm}] (P2) -- (P1);
      \draw[tobehere,decorate,decoration={snake,amplitude=.4mm,segment length=2mm}] (P1) -- (B);
      
      \path
  
      (C1) edge [bend left, blue!50!red](X1)
      edge [bend left, blue!50!red] (NX2)
      edge [bend left, blue!50!red] (X3)
  
      (C2) edge [bend left, red!50!white] (X2)
      edge [bend left, red!50!white] (X3)
      
      ;
    \end{tikzpicture}
    }
    
    \hspace{13.2cm}		
    \raisebox{-1.5cm}{
      \fbox{
        \tiny
        \begin{tabular}{ll}
          \tikz{\draw (0, 0) edge (0.4, 0);} & Movement \\
          \tikz{\draw[communication, tobehere] (0, 0) edge (0.4, 0);} & Communication \\
          \tikz{\draw[tobehere,decorate,decoration={snake,amplitude=.4mm,segment
          length=2mm}] (0, 0) -- (0.4, 0);} & Fully connected \\
          \tikz{\draw[blue!50!red] (0, 0) edge (0.4, 0);} & \multirow{2}{*}{Clauses dependencies}\\
          \tikz{\draw[red!50!white] (0, 0) edge (0.4, 0);} &
        \end{tabular}
        }
    }
    \end{center}
    \vspace{1.8cm}
    \caption{Graph construction from the formula $(x_1 \vee \neg x_2 \vee x_3) \wedge (x_2 \vee x_3)$.}
    \label{fig:cons}
  \end{figure*}

  \begin{figure}
    \centering
    \begin{subfigure}{.23\textwidth}
      \centering
      \begin{tikzpicture}[node distance=1cm,
                      thick,
                      main node/.style={,font=\sffamily\small,minimum size=0.5cm},
                      staging node/.style={circle,fill=black}]
  
        \node[main node] (1) {$\basenode$};
        \node[staging node] (2) [below left of=1]{};
        \node[left of=2, node distance=0.4cm] {$n_x$};
        \node[main node] (3) [below of=2] {$\booleanvariablegeneric$};
        \node[staging node] (4) [below right of=1] {};
        \node[right of=4, node distance=0.5cm] {$n_{\lnot x}$};
        \node[main node] (5) [below of=4] {$\neg \booleanvariablegeneric$};
  
        \node[main node] (6) [below right of=2] {$g_\booleanvariablegeneric$};
  
        \path[every node/.style={font=\sffamily\small}]
        (1) edge (2)
        (2) edge (3)
        (3) edge (6)
  
        (1) edge (4)
        (4) edge (5)
        (5) edge (6)
        ;
  
        \path[communication,tobehere]
        (1) edge (3)
        (1) edge (5)
        ;
    \end{tikzpicture}
    \caption{Variable gadget.}
    \label{fig:Gvar}
    \end{subfigure}%
    \begin{subfigure}{.23\textwidth}
      \centering
      \begin{tikzpicture}[node distance=0.7cm,
                      thick,
                      main node/.style={font=\sffamily\small,minimum size=0.5cm},
                      staging node/.style={circle,fill=black}]
  
        \node[main node] (1) {$\basenode$};
        \node[staging node] (2) [below of=1] {};
        \node[left of=2,node distance=0.4cm] {$n_c$};
        \node[main node] (3) [below of=2] {$\clausegeneric$};
        \node[main node] (4) [below of=3] {$g_\clausegeneric$};
  
        \path[every node/.style={font=\sffamily\small}]
        (1) edge (2)
        (2) edge (3)
        (3) edge (4)
        ;
  
    \end{tikzpicture}
    \caption{Clause gadget.}
    \label{fig:Gcla}
    \end{subfigure}
    \caption{Translation gadgets}
  \end{figure}
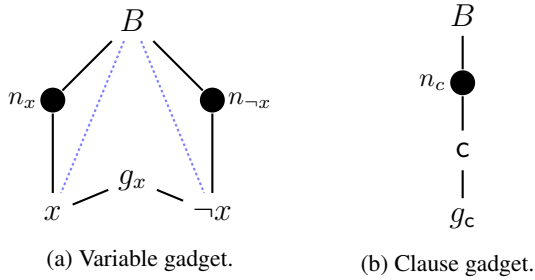

\fi

\begin{proposition} \label{prop:sm:lb:breach}
  \pbBReachSM is NP-hard for a fixed execution length $\ell\geq 3$.
\end{proposition}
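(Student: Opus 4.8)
The plan is to prove \pbBReachSM\ NP-hard by a reduction from $3$-SAT, following the gadget construction sketched in Figures~\ref{fig:cons}, \ref{fig:Gvar} and~\ref{fig:Gcla}. Given a CNF formula $\phi$ with variables $x_1,\dots,x_n$ and clauses $\clausegeneric_1,\dots,\clausegeneric_m$, I build a sight-moveable topological graph $G$, a target configuration $c$ with $n+m$ agents, and I fix the execution length $\ell$ (one can pad the gadgets with extra nodes and self-loops to realise any constant $\ell\ge 3$). For each variable $x_i$ there is a ``diamond'' gadget in which an agent leaving $\basenode$ has to commit, early on, to the branch through the literal node $x_i$ or the branch through $\neg x_i$; the two branches merge at a terminal node $g_{x_i}$ that the agent is meant to occupy at the last step, and communication edges $\basenode\coms x_i$, $\basenode\coms\neg x_i$ keep an agent sitting on a literal node connected to the base. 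For each clause $\clausegeneric_j$ there is a path gadget ending in a clause node $\clausegeneric_j$ (and a terminal node $g_{\clausegeneric_j}$), arranged so that the only communication edges touching $\clausegeneric_j$ run to the literal nodes of the literals occurring in $\clausegeneric_j$; thus an agent on $\clausegeneric_j$ is connected to $\basenode$ only when one of those literal nodes is simultaneously occupied. Finally, an auxiliary ``fully connected'' path $P_1,P_2,\dots$ joins all the terminal nodes $g_v$ to one another and to $\basenode$.

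For correctness, the forward direction is routine. Given a satisfying assignment $\sigma$, I route variable agent~$i$ through the literal node selected by $\sigma(x_i)$ and on to $g_{x_i}$, and I route clause agent~$j$ to $\clausegeneric_j$ and then to $g_{\clausegeneric_j}$: at the step where the clause agents sit on the clause nodes, each $\clausegeneric_j$ is connected to $\basenode$ through the occupied literal node witnessing $\sigma\models\clausegeneric_j$; at the final step all agents are on the terminal nodes, which form a connected subgraph via the $P_k$'s and the $g$-links; and all intermediate configurations are connected (the moving agents are always adjacent to $\basenode$ or relayed by the $P_k$'s). Hence $c$ is reached. Conversely, from any valid execution I inspect the configuration one step before the clause agents reach their terminals: to be able to finish on every $g_{x_i}$ and $g_{\clausegeneric_j}$, variable agent~$i$ must be on $x_i$ or on $\neg x_i$ there, and clause agent~$j$ must be on $\clausegeneric_j$. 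Defining $\sigma(x_i)$ by the side occupied by agent~$i$ gives an assignment, and connectivity of every $\clausegeneric_j$ at that step means each clause contains an occupied, hence true, literal; so $\sigma$ (extended arbitrarily) satisfies $\phi$.

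The delicate point — and what the auxiliary nodes are really there for — is to make $G$ genuinely \emph{sight-moveable} without destroying the reduction. By Definition~\ref{def:topo:sm}, every communication edge $u\coms v$ must be witnessed by a move-path from $u$ to $v$ all of whose vertices stay in $\coms$-range of~$u$ (``no communication through obstacles''). This is immediate for the gadget edges, but the clause-to-literal edges, which carry the whole reduction, need witness paths routed through the $g$-nodes and the $P_k$-path; I would have to design these so that no such path ever forces $\basenode\coms\clausegeneric_j$, nor a short $\moves$-path from $\basenode$ to $\clausegeneric_j$ — either of which would make every clause trivially ``satisfied'' and collapse the reduction. Verifying this, together with checking that for the fixed length $\ell$ there is no alternative short route letting an agent land on a terminal node without having made the intended commitment, is the main work; the combinatorics of the gadgets themselves is straightforward.
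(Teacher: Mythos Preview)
Your reduction and both directions of the correctness argument coincide with the paper's. Where you diverge is precisely the point you flag as ``the main work'': how to make the clause--literal links sight-moveable without spoiling the reduction. You treat $\clausegeneric_j\coms\ell$ as a pure communication edge and then contemplate threading witness $\moves$-paths through the $g$-chain and the $P_k$'s, worrying that this might force $\basenode\coms\clausegeneric_j$ or create short $\moves$-shortcuts.

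The paper dissolves this difficulty with one observation you have not made: simply let the clause--literal links be \emph{movement} edges. In a neighbor-communicable graph every $\moves$-edge is automatically a $\coms$-edge, and a single movement edge $u\moves v$ is its own sight-moveability witness (the one-step path $u,v$). With this choice the only communication edges not induced by movement are $\basenode\coms\booleanvariablegeneric_i$ and $\basenode\coms\neg\booleanvariablegeneric_i$ (each witnessed by the two-step path through its staging node) together with the extra edges inside the fully-connected auxiliary path (witnessed within that path itself). Crucially, adding $\clausegeneric_j\moves\ell$ neither creates $\basenode\coms\clausegeneric_j$ nor any new length-$3$ $\moves$-path from $\basenode$ to a goal node: the unique such path to $g_{\clausegeneric_j}$ still runs $\basenode\to n_{\clausegeneric_j}\to\clausegeneric_j\to g_{\clausegeneric_j}$, and the only length-$3$ paths to $g_{\booleanvariablegeneric_i}$ still pass through $\booleanvariablegeneric_i$ or $\neg\booleanvariablegeneric_i$. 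So the commitment structure and the step-$2$ connectivity argument you describe survive intact, and what you anticipated as the delicate verification becomes a two-line check.
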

\begin{proof}
  The proof is by polynomial time reduction from 3-SAT problem (see \cite{DBLP:conf/coco/Karp72}).
  Given a 3-SAT instance, set of clauses $c_1,\ldots,c_m$ with variables~$x_1,\ldots,x_n$,
  we describe the
  construction of an instance $(G, c)$ of \pbBReachSM \linebreak with~$k=n+m$ agents.

  %
  The topological graph $G=\langle\setnodes, \moves, \coms \rangle$ is constructed
  as follows. We start by placing the base $\basenode$ from which the agents start
  their mission.

  Please recall that a sight-moveable graph is also a neighbor-communicable graph so
  all movements edges are also communication edges in the construction below
  even if not explicitly stated.

  For each variable $\booleanvariablegeneric$, we construct a gadget composed of
  5 nodes connected to the base depicted in Figure \ref{fig:Gvar}:  nodes
  $\booleanvariablegeneric$, $\neg \booleanvariablegeneric$, staging nodes
  $n_\booleanvariablegeneric$, $n_{\neg \booleanvariablegeneric}$ and a
  \emph{goal} node $g_\booleanvariablegeneric$. We add movement edges from $B$
  to $n_\booleanvariablegeneric$, from $n_\booleanvariablegeneric$ to
  $\booleanvariablegeneric$ and from $\booleanvariablegeneric$ to
  $g_\booleanvariablegeneric$ (resp. from $B$ to $n_{\neg
  \booleanvariablegeneric}$, from $n_{\neg \booleanvariablegeneric}$ to $\neg
  \booleanvariablegeneric$ and from $\neg \booleanvariablegeneric$ to
  $g_\booleanvariablegeneric$). As for the communication, the node
  $\booleanvariablegeneric$ (res. $\neg \booleanvariablegeneric$) communicates
  with the base. 

  For each clause $\clausegeneric$, we construct a gadget composed of 3
  nodes depicted in Figure \ref{fig:Gcla}. We create a node $\clausegeneric$, a
  staging node $n_\clausegeneric$ and a goal node $g_\clausegeneric$. We add
  movement edges from $B$ to $n_c$, from $n_\clausegeneric$ to $\clausegeneric$
  and from $\clausegeneric$ to $g_\clausegeneric$. 
  The communication between a clause $\clausegeneric$ and a literal
  $\booleanvariablegeneric$ or $\neg \booleanvariablegeneric$ is dictated by the
  existence of the literal in the clause: $\clausegeneric_i\coms
  \booleanvariablegeneric_j$ if and only if~$\booleanvariablegeneric_j \in
  \clausegeneric_i$; and $\clausegeneric_i\coms \lnot \booleanvariablegeneric_j$
  if and only if~$\lnot \booleanvariablegeneric_j \in \clausegeneric_i$.

  We add movement edges from
  $g_{\booleanvariablegeneric_i}$ to $g_{\booleanvariablegeneric_{i+1}}$,
  and from $g_{\clausegeneric_i}$ to~$g_{\clausegeneric_{i+1}}$ for all~$1 \leq i < n$,
  as well as we from
   $g_{\booleanvariablegeneric_n}$ to $g_{\clausegeneric_1}$.
   Last, we add a fully connected path containing 3 fresh nodes from $g_{\booleanvariablegeneric_1}$ to the base such that $g_{\booleanvariablegeneric_1} \coms \basenode$,
   in the sense that all nodes of this path have communication edges between them.
  This translation is polynomial in the number of clauses and variables.
  The construction is depicted in Figure~\ref{fig:cons}.
  The snake-like path from $g_{\booleanvariablegeneric_1}$ to $\basenode$ is the
  fully connected path.

  From a 3-SAT instance, one can construct the graph $G$ and ask for an
  execution of length 3 to reach the configuration \linebreak $\langle
  g_{\booleanvariablegeneric_1}, \dots, g_{\booleanvariablegeneric_n},
  g_{\clausegeneric_1}, \dots, g_{\clausegeneric_m}\rangle$.

  \iffull
  \begin{fact}
    $G$ is a sight-moveable topological graph.
  \end{fact}
  \begin{proof}
    One can see that the single communication edges created by the construction,
    apart from the ones induced by the movement, are the communication between
    the base $B$ and the nodes $\booleanvariablegeneric_i$ and $\neg
    \booleanvariablegeneric_i$. Hence, a path does exist under the communication
    of $B$ to reach $\booleanvariablegeneric_i$.
  \end{proof}

  Now let us prove that a 3-SAT instance is satisfiable iff there exists an
  execution of at most 3 steps in the graph $G$.

  ($\Rightarrow$) We show that if a 3-SAT instance is satisfiable then there
  exists an execution of at most 3 steps in the graph $G$ built from it. Let
  $val$ be a truth assignment which satisfies the instance.
  Recall that there are~$n+m$ agents.
  The first step of the execution consists in moving an agent in each~$n_{c_i}$,
  and for each variable~$x_j$, moving one agent to~$n_{x_j}$ if the $val(x_j)=1$
  and to~$n_{\lnot x_j}$ otherwise.
  Note that all staging nodes communicate with~$\basenode$ since the graph
  is neighbor-communicable.

  In the second step, all agents progress to their unique successors other than~$\basenode$.
  While all nodes~$x_j$ and~$\lnot x_j$ are connected to~$\basenode$,
  a node~$c_i$ is connected to~$\basenode$ if and only if there is an
  agent in one of its literals. This is the case since $val$ satisfies the formula.
  In the third step of the execution, agents go to states~$g_{x_j}$ and~$g_{c_i}$.
  Here, the connection with the base is ensured since~$g_{x_1}$ is connected
  to it, and~$g_{x_2}$ is connected to~$g_{x_1}$, $g_{x_3}$ is connected to~$g_{x_2}$ and
  so on.

  This execution is thus a solution of \pbBReachSM with bound $\ell=3$.

  ($\Leftarrow$) We now show that if there exists an execution of at most 3
  steps in the graph $G$ constructed from a 3-SAT instance, then the instance is
  satisfiable. Assume we have an execution $e$ of at most 3 steps with the last
  configuration being $\langle g_{\booleanvariablegeneric_1}, \dots,
  g_{\booleanvariablegeneric_n}, g_{\clausegeneric_1}, \dots,
  g_{\clausegeneric_m}\rangle$.

  The only shortest path from $\basenode$ to~$g_{c_i}$ is of length~$3$ and goes through
  $n_{c_i}$. For states $g_{x_j}$, the only shortest paths are also of length~$3$
  and go through either $n_{x_j}$ or~$n_{\lnot x_j}$. Thus, in order to reach the given
  target configuration, at the initial step, agents must cover the states~$n_{c_i}$
  and either~$n_{x_j}$ or~$n_{\lnot x_j}$ for all $i,j$. At the second step,
  following the above mentioned shortest paths, agents will be at states~$c_i$
  and either~$x_j$ or~$\lnot x_j$ depending on the staging nodes they were occupying.
  The last step is the target configuration.
  Since the agents are connected at the second, it follows that for each clause~$c_i$,
  the state corresponding to some literal of~$c_i$ is occupied by an agent.
  Thus the valuation on variables encoded by the choices of the agents satisfies the
  3-SAT instance.

  \else
  The rest of the proof is given in the long version.
  \fi
\end{proof}

From Propositions~\ref{prop:dir:ub:bcover-breach} and~\ref{prop:sm:lb:breach}, we have:

\begin{theorem} \label{th:sm:compl:breach}
  \pbBReachSM is NP-complete.
\end{theorem}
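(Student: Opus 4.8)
The plan is essentially to package the two bounds that the preceding propositions have already supplied. For the NP upper bound, the cleanest route is to observe that every sight-moveable topological graph is in particular a directed topological graph: Definition~\ref{def:topo:sm} only imposes extra constraints (reflexivity and undirectedness of \moves, the sight-moveability condition), so an instance of \pbBReachSM is also an instance of \pbBReachDir. Membership in NP then follows at once from Proposition~\ref{prop:dir:ub:bcover-breach}. Equally quickly, one can argue directly: the witness is the execution $\langle c^1,\dots,c^{\ell'}\rangle$ itself with $\ell'\le\ell$; since $\ell$ is presented in unary this certificate has polynomial size, and verifying $c^1=\langle \basenode,\dots,\basenode\rangle$, $c^{\ell'}=c$, $c^i\moves c^{i+1}$ for each $i$, and that each $c^i$ is a legal (connected) configuration is clearly polynomial time.

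For the NP lower bound I would simply cite Proposition~\ref{prop:sm:lb:breach}, which already establishes NP-hardness of \pbBReachSM — in fact already for the fixed execution length $\ell=3$ — via the reduction from 3-SAT built around the variable and clause gadgets of Figure~\ref{fig:cons}. Combining the matching upper and lower bounds yields NP-completeness. There is no genuine obstacle at this stage: all the real content lives in the earlier results (the gadget construction certifying hardness, and the $\USTCONN$-based LOGSPACE arguments for the unbounded variants), and the present theorem is only the bookkeeping step that records that these two bounds meet.
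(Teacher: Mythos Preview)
Your proposal is correct and matches the paper's own proof exactly: the paper simply states that the theorem follows from Propositions~\ref{prop:dir:ub:bcover-breach} and~\ref{prop:sm:lb:breach}, which is precisely the upper- and lower-bound packaging you describe. The additional direct guess-and-check argument you sketch for NP membership is also fine and is in fact the content of Proposition~\ref{prop:dir:ub:bcover-breach}.
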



\section{Complete-Communication Topological Graphs}
\label{sec:compl}



The following result relies on the fact that the communication constraints are
trivial this class.

\begin{proposition}
    \label{prop:cc:ub:breach}
    \pbBReachCC is in LOGSPACE.
\end{proposition}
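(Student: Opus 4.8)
The plan is to exploit that in a complete-communication graph the communication
constraint is vacuous, reducing the problem to independent bounded reachability
of single nodes in an undirected graph, and then to invoke
Lemma~\ref{lemma:boundedUSTCONNinLOGSPACE}.

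First I would observe that if $G=\langle\setnodes,\moves,\coms\rangle$ is
complete-communication then $\coms=\setnodes\times\setnodes$, so \emph{every}
tuple in $\setnodes^n$ is a configuration in the sense of
Definition~\ref{def:config}: the connectivity requirement is met automatically,
for the target as well as for every intermediate configuration. Hence a sequence
$\langle c^1,\dots,c^{\ell'}\rangle$ with $c^i\moves c^{i+1}$ is automatically an
execution, and it suffices that each agent independently follow a $\moves$-walk
from $\basenode$; collisions are allowed, and since the agents are anonymous and
all start in $\basenode$ no agent-to-target matching is needed (agent $j$ simply
heads for $c_j$). Because $\moves$ is reflexive, a walk of length $d$ can be
padded with self-loops to any length $\ge d$, so walks of different lengths can
be synchronized to a common one. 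Therefore $(G,c,\ell)$ is a positive instance
iff $\ell\ge 1$ and for every $j$ there is a $\moves$-walk from $\basenode$ to
$c_j$ of length at most $\ell-1$, i.e.\ $\mathrm{dist}_{\moves}(\basenode,c_j)\le
\ell-1$ for all $j$.

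Next I would turn this characterization into a logarithmic-space procedure by
reduction to Bounded-\USTCONN. The undirected graph
$\langle\setnodes,\moves\rangle$ is a projection of the input $G$ computable in
logarithmic space (one enumerates the $\moves$-edges of $G$), and since $\ell$ is
given in unary the value $\ell-1$ is readily available. For $j=1,\dots,n$ I would
run the logarithmic-space algorithm of
Lemma~\ref{lemma:boundedUSTCONNinLOGSPACE} on
$(\langle\setnodes,\moves\rangle,\basenode,c_j,\ell-1)$ and accept iff all calls
accept (rejecting outright if $\ell=0$). The only working memory used is an
$O(\log n)$ counter for $j$ together with the workspace of the subroutine, which
is reused across iterations; composing logarithmic-space computations stays in
LOGSPACE, with the same space accounting as in Proposition~\ref{prop:sm:ub:reach}.
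Thus \pbBReachCC is in LOGSPACE. There is no substantial obstacle here: the crux
is merely recognizing that completeness of $\coms$ trivializes the connectivity
side of the problem, after which the only points needing (routine) care are the
off-by-one between execution length and number of moves, and the use of
reflexivity to synchronize the agents.
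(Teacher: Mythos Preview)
Your proof is correct and follows essentially the same approach as the paper: both reduce the problem to independent calls to Bounded-\USTCONN\ (Lemma~\ref{lemma:boundedUSTCONNinLOGSPACE}) on $(\langle\setnodes,\moves\rangle,\basenode,c_j,\ell)$, iterating over the target positions~$c_j$. Your version is in fact more careful than the paper's, explicitly noting the use of reflexivity for synchronization and the off-by-one between execution length and number of moves.
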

\begin{proof}
    From Lemma \ref{lemma:boundedUSTCONNinLOGSPACE}, one can construct an
    algorithm in LOGSPACE for \pbBReachCC. Indeed, given a configuration $c$ and
    $\ell\in\mathds{N}$, the straightforward iteration on the locations $c_i$
    followed by the verification of a path of at most $\ell$ (given in unary) steps from
    $\basenode$ to $c_i$ yields a sound and complete algorithm for \pbBReachCC.
\end{proof}


Our NP lower bound proof of the \pbBCoverageCC problem is by reduction from
the grid Hamiltonian cycle (G-HC) problem which is the Hamiltonian cycle problem
restricted to grid graphs and is NP-complete~\cite{Itai:1982}.


\begin{theorem} \label{prop:cc:lb:bcover}
  \label{th:cc:compl:bcover}
  \pbBCoverageCC is NP-complete.
\end{theorem}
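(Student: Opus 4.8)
The statement has two halves, and I would handle them separately.

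\emph{Membership in NP.} This is immediate: a complete-communication topological graph is in particular a topological graph, so \pbBCoverageCC is a sub-case of \pbBCoverageDir, which is in NP by Proposition~\ref{prop:dir:ub:bcover-breach}. Concretely one guesses an execution of length at most $\ell$ (which has polynomial size since $n$ and $\ell$ are written in unary) and checks that it is a covering execution; here the connectivity condition on configurations is vacuous because $\coms=V\times V$. So the real work is the NP lower bound, which I would obtain by a polynomial-time reduction from grid Hamiltonian cycle (G-HC).

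\emph{The construction.} Given a grid graph $H=\langle W,E_H\rangle$ with $|W|=N$ (I may assume $H$ is connected and $N\ge 3$, otherwise the G-HC instance is trivially negative), fix an arbitrary vertex $w_0\in W$ and define $G=\langle V,\moves,\coms\rangle$ by: $V=W\cup\{\basenode\}$; $\moves$ consists of the edges of $H$, the extra edge $\{\basenode,w_0\}$, and a self-loop at every node; and $\coms=V\times V$. The movement graph is undirected, reflexive and connected, so by the observation following Definition~\ref{def:topo:compl} this $G$ is a legal complete-communication topological graph. I take $n=1$ agent and set the bound $\ell=N+3$; both numbers are in unary and the whole construction is clearly polynomial. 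The claim to prove is: $H$ has a Hamiltonian cycle iff $G$ admits a covering execution of length at most $\ell$ with one agent.

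\emph{Correctness.} For the forward direction, rotate a Hamiltonian cycle of $H$ so that it reads $w_0=u_1,u_2,\dots,u_N,u_1$; then $\langle\basenode\rangle,\langle u_1\rangle,\dots,\langle u_N\rangle,\langle u_1\rangle,\langle\basenode\rangle$ is an execution of exactly $N+3$ configurations that visits $W\cup\{\basenode\}=V$, and every configuration is valid since $\coms=V\times V$. For the backward direction, consider any covering execution of length $\ell'\le\ell$, write $v_i$ for the position of the single agent at step $i$, and let $t_1$ (resp.\ $t_2$) be the first (resp.\ last) step $t$ with $v_t\in W$. Since $v_1=v_{\ell'}=\basenode\notin W$ we get $2\le t_1\le t_2\le\ell'-1\le N+2$, and since $w_0$ is the only non-loop neighbour of $\basenode$, necessarily $v_{t_1}=v_{t_2}=w_0$. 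Hence the subwalk from $t_1$ to $t_2$ is a closed walk ending at $w_0$ of length $t_2-t_1\le N$ whose set of visited vertices must contain all of $W$. A closed walk of length $L$ meets at most $L$ distinct vertices, so $L=N$, all visited vertices are pairwise distinct, $\basenode$ is never re-entered between $t_1$ and $t_2$, and no self-loop is used; i.e.\ this subwalk is a Hamiltonian cycle of $H$.

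\emph{Main obstacle.} The delicate point is the tightness of $\ell=N+3$ in the backward direction: I must rule out the agent covering $V$ any faster by wasting a step on a self-loop near $\basenode$, lingering, or passing through $\basenode$ again mid-mission. This is exactly what the degree restriction on $\basenode$ (adjacent only to $w_0$ besides its loop) buys, together with the elementary counting bound ``a closed walk of length $L$ meets $\le L$ vertices''. The remaining ingredients --- that $G$ is a well-formed complete-communication graph, that the connectivity constraint is automatically satisfied, and polynomiality of the reduction --- are routine. Combined with NP-membership, this yields NP-completeness of \pbBCoverageCC.
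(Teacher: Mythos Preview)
Your proof is correct and uses the same reduction as the paper: grid Hamiltonian cycle, a single agent, a tight length bound, and NP-membership inherited from Proposition~\ref{prop:dir:ub:bcover-breach}. The only difference is cosmetic---the paper identifies $\basenode$ with a vertex of the input grid and sets the bound to $|V|$, while you adjoin a fresh pendant base and take $\ell=N+3$; your version is in fact tidier, since you explicitly add the self-loops required for reflexivity (Definition~\ref{def:topo:sm}) and your counting argument cleanly avoids the off-by-one that the paper's terse backward direction glosses over.
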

\iffull
\begin{proof}
  The upper bound follows from Proposition~\ref{prop:dir:ub:bcover-breach}.
  
  We give a polynomial-time reduction from the G-HC problem.
  Consider a graph~$G=\langle V, E\rangle$, an instance of~G-HC.

  Consider the sight-moveable topological graph 
  $G'=\langle V,
  \moves, \coms\rangle$ with $\moves~=E$ and $\coms=V\times V$ and associate a
  single agent and the bound~$|V|$ to the \pbBCoverageCC instance.
  We call a simple cycle containing all vertices a \emph{tour}.
  We prove that there exists a tour $t$ in $G$ iff there exists a
  covering execution of length $|V|$ in $G'$.

  ($\Rightarrow$) Any tour of~$G$ is a valid execution satisfying
  \pbBCoverageCC since the communication edges form a complete graph,
  and the bound is~$|V|$.

  ($\Leftarrow$) Let us suppose that we have an execution of length $|V|$
  which covers the graph $G'$. The execution starts and ends at $\basenode$
  and
  visits all nodes in $|V|$ steps. Hence, the execution visits all nodes only
  once and is a cycle in the graph.

\end{proof}
\else
The upper bound follows from Proposition~\ref{prop:dir:ub:bcover-breach}. The
  NP-hardness proof is given in the long version.
\fi



\section{Related Work}
\label{sec:relate}
\iffull
\input{results_tab_small}
\else
\input{results_tab_small}
\fi
The coverage planning is an interesting approach to path planning. Indeed, a
covering plan can be used for fields such as floor cleaning, lawn mowing, etc. A
survey of this field appears in \cite{Choset:01}. This multi-agent extension has
the ability to reduce the length of the overall mission and also reach parts of
the area a single agent would not able to. This problem was studied in
\cite{Rekleitis:97} for two agents. As shown in the survey by Chen et al.
\cite{ChenSurvey}, many coverage problems have been addressed by using analytic
techniques. For instance, in \cite{DBLP:conf/icc/Yanmaz12} and
\cite{teacy2010maintaining}, they consider UAVs that should cover an area
while staying connected to the base, but only empirically study some path planning
algorithms without proving their algorithms formally.

We advocate formal methods that give formal guarantees and have already been applied to
generate plans for robots and UAVs. Model checking has been
applied to robot planning (see \cite{DBLP:conf/iros/LacerdaPH14}) and to UAVs~\cite{webster2011formal}.
Humphrey \cite{Humphrey2013} shows how to use LTL (linear-temporal logic) model
checking for capturing response and fairness properties in cooperation (for
instance, if a task is requested then it is eventually performed). 


Bodin et al. \cite{IJCAI2018demodrones} treat a similar problem except that the
UAVs cover the graph without returning to the base. Without the return-to-the-base
constraint, we claim that all our hardness results still hold, except
for \pbBCoverageCC. They provide an implementation by describing the problem in
Planning Domain Description Language and then run the planner
Functional Strips \cite{DBLP:conf/ijcai/FrancesRLG17}.


Murano et al. \cite{DBLP:conf/prima/MuranoPR15} advocate for a
graph-theoretic representations of states, that is, by assigning locations to
agents as in Definition~\ref{def:config}. In
\cite{DBLP:conf/atal/AminofMRZ16,DBLP:conf/atal/Rubin15}, a general formalism is given to specify LTL and monadic second-order logic properties,
which are expressive enough to describe the connectivity constraint.
They provide an
algorithm for parametrized verification in the sense that they check a temporal
property in a class of graphs. This is relevant for partially-known
environments. The algorithm described is
non-elementary 
(\textit{i.e.} the running time cannot bounded by any tower of exponentials)
and therefore not
usable in practice. We believe that this is an important problem
and our paper identifies an efficient and relevant fragment.

The multiple traveling salesman problem (mTSP) is a generalization of the
traveling salesman problem (TSP) in which multiple salesmen are located at a
depot \cite{Anbuudayasankar:2016}. mTSP asks for the coverage of all cities so as to minimize the total plan cost by visiting each city exactly once.
An overview of TSP and its extensions are presented
\cite{Matai:2010}. The \pbCoverage{} problem is related to mTSP, since we use
results on Hamiltonian cycle to prove the NP-hardness of \pbBCoverageCC.
However, we wish to minimize the length of the execution and not the cost of the
execution. Those problems are equivalent on unit graphs, but it is not trivial
to use general results on mTSP in order to solve \pbCoverage{}. Furthermore, to
the best of our knowledge, connected versions of mTSP and VRP have not been
studied.


\section{Conclusion}
\label{sec:concl}

Sight-moveable topological graphs we introduced in this work only constrain
the communication graph. One can be interested to constrain the movement graph
to a planar graph or a 2D grid given the common usage of grid modelling of the
environment. Given the intractability of MAPP on planar graphs \cite{Yu:2015}
and on general 2D grid graphs \cite{Banfi:2017}, it is likely that this
problem is intractable as well. Furthermore, in
\cite{dblp:conf/aaai/tateobrab18}, the decision is proved to stay
PSPACE-complete on planar graphs and grids as well. However, one can study this
problem on solid grid graphs, given that the Hamiltonian cycle is tractable on
such graphs \cite{Umans:1997}.

One can note that our NP lower bound reductions hold without the
anonymity of the agents. Indeed, the \pbBCoverage{} case is straightforward and for \pbBReach{} case, each agent can be associated to a clause or variable, so the reduction would still hold.

We do not know  if \pbCoverage{} remains hard when the
$\rightarrow$-relations become symmetric, depicted in Figure~\ref{fig:results}
as a question mark. We think this open issue is important since symmetric
{$\rightarrow$-relations} (if UAVs can go from $v$ to $v'$, they can also come
back from $v'$ to $v$) are relevant for practical applications. We plan to
study the \emph{parametrized complexity} \cite{DBLP:series/mcs/DowneyF99} of our
problems - parameters could be the treewidth of the topological graph, the
number of UAVs.





\appendix

\bibliographystyle{named}
\bibliography{biblio}

\end{document}